\DeclareSymbolFontAlphabet{\amsmathbb}{AMSb}
\DeclareSymbolFontAlphabet{\amsmathbb}{AMSb}
\newcommand\munderbar[1]{%
  \underaccent{\bar}{#1}}
\crefname{equation}{Eq.}{Eqs.}
\crefname{pluralequation}{Eqs.}{Eqs.}
\crefname{algorithm}{Algorithm}{Algorithm}
\crefname{figure}{Fig.}{Figs.}
\crefname{pluralfigure}{Figs.}{Figs.}
\crefname{section}{Sect.}{Sects.}
\crefname{pluralsection}{Sects.}{Sects.}
\crefname{table}{Table}{Table}
\crefname{pluraltable}{Tables}{Tables}
\crefname{definition}{Def.}{Def.}
\crefname{pluraldefinition}{Defs.}{Defs.}
\crefname{theorem}{Theorem}{Theorems}
\crefname{pluraltheorem}{Theorems}{Theorems}
\crefname{lemma}{Lemma}{Lemmas}
\crefname{plurallemma}{Lemmas}{Lemmas}
\crefname{example}{Example}{Example}
\crefname{pluralexample}{Examples}{Examples}
\crefname{problem}{Problem}{Problem}
\crefname{pluralproblem}{Problems}{Problems}
\crefname{assumption}{Assumption}{Assumption}
\crefname{pluralassumption}{Assumptions}{Assumptions}
\crefname{remark}{Remark}{Remark}
\crefname{pluralremark}{Remarks}{Remarks}
\crefname{proposition}{Proposition}{Proposition}
\crefname{pluralproposition}{Propositions}{Propositions}
\crefname{appendix}{Appendix}{Appendices}
\crefname{pluralappendix}{Appendices}{Appendices}
\pgfplotsset{compat=1.8}
\definecolor{red}{rgb}{0.745,0.192,0.102}
\definecolor{darkgreen}{RGB}{34,161,55}
\definecolor{ruhuisstijlrood}{rgb}{0.745,0.192,0.102}
\definecolor{ruhuisstijlzwart}{rgb}{0,0,0}
\definecolor{ruhuisstijlwit}{rgb}{0.98,0.98,0.98}
\newcommand{\ie}{i.e.\@}
\newcommand{\eg}{e.g.\xspace}
\newcommand{\storm}{\textrm{Storm}}
\newcommand{\U}{\mathbf{u}}
\newcommand{\Param}{\ensuremath{V}}
\newcommand{\param}{\ensuremath{v}}
\DeclareMathOperator*{\minimize}{minimize}
\DeclareMathOperator*{\maximize}{maximize}
\newcommand{\Real}{\amsmathbb{R}}
\newcommand{\Q}{\amsmathbb{Q}}
\newcommand{\F}{\amsmathbb{T}}
\newcommand{\N}{\amsmathbb{N}}
\newcommand{\distr}[1]{\ensuremath{\mathit{Dist(#1)}}}
\newcommand{\pdistr}[1]{\ensuremath{\mathit{pDist_{\Param}(#1)}}}
\newcommand{\pfun}[1]{\ensuremath{\mathit{pFun_{\Param}(#1)}}}
\newcommand{\States}{\ensuremath{S}}
\newcommand{\terminalStates}{\ensuremath{S}_T}
\newcommand{\sI}{\ensuremath{s_I}}
\newcommand{\transfunc}{\ensuremath{P}}
\newcommand{\transfuncImdp}{\ensuremath{\mathcal{P}}}
\newcommand{\PMC}{\ensuremath{(\States,\sI,\Param,\transfunc)}}
\newcommand{\RMC}{\ensuremath{(\States,\sI,\transfuncImdp)}}
\newcommand{\PRMC}{\ensuremath{(\States,\sI,\Param,\transfuncImdp)}}
\newcommand{\pmc}{\ensuremath{\mathcal{M}}}
\newcommand{\prmc}{\ensuremath{\mathcal{M}_{R}}}
\newcommand{\diag}[1]{\ensuremath{{\mathbf{D}(#1)}}}
\newcommand{\ExpR}{\ensuremath{\mathsf{sol}}}
\newcommand{\ExpRR}{\ensuremath{\mathsf{sol}_R}}
\newcommand{\post}[1]{\ensuremath{{\mathsf{post}(#1)}}}
\newcommand{\parder}[2]{\ensuremath{{\frac{\partial #1}{\partial #2}}}}
\newcommand{\grad}[2]{\ensuremath{{\nabla #2 }}}
\newacronym[]{LTL}{LTL}{linear temporal logic}
\newacronym[]{iid}{i.i.d.}{independent and identically distributed}
\newacronym[]{UAV}{UAV}{unmanned aerial vehicle}
\newacronym[]{PAC}{PAC}{probably approximately correct}
\newacronym[]{DRO}{DRO}{distributionally robust optimization}
\newacronym[plural=MCs,firstplural=Markov chains (MCs)]{MC}{MC}{Markov chain}
\newacronym[plural=DTMCs,firstplural=discrete-time Markov chains (DTMCs)]{DTMC}{DTMC}{discrete-time Markov chain}
\newacronym[plural=MDPs,firstplural=Markov decision processes (MDPs)]{MDP}{MDP}{Markov decision process}
\newacronym[plural=pMCs,firstplural=parametric MCs (pMCs)]{pMC}{pMC}{parametric MC}
\newacronym[plural=rMCs,firstplural=robust MCs (rMCs)]{rMC}{rMC}{robust MC}
\newacronym[plural=prMCs,firstplural=parametric robust MCs (prMCs)]{prMC}{prMC}{parametric robust MC}
\newacronym[plural=prMDPs,firstplural=parametric robust MDPs (prMDPs)]{prMDP}{prMDP}{parametric robust MDP}
\newacronym[plural=iMDPs,firstplural=interval Markov decision processes (iMDPs)]{iMDP}{iMDP}{interval Markov decision process}
\newacronym[plural=aMDPs,firstplural=augmented MDPs (aMDPs)]{aMDP}{aMDP}{augmented MDP}
\newacronym[plural=POMDPs,firstplural=partially observable Markov decision processes (POMDPs)]{POMDP}{POMDP}{partially observable Markov decision process}
\definecolor{color1}{RGB}{55,126,184} 
\definecolor{color2}{RGB}{228,26,28} 
\definecolor{color3}{RGB}{77,175,74} 
\definecolor{color4}{RGB}{152,78,163} 
\definecolor{color5}{RGB}{255,127,0} 
\definecolor{color6}{rgb}{0.5, 1.0, 0.83} 
\definecolor{color7}{rgb}{1.0, 0.0, 1.0} 
\definecolor{color8}{rgb}{0.66, 0.66, 0.66} 
\newcommand{\scatterplotstorm}[6]{%
	\begin{tikzpicture}
	\begin{axis}[
	width=\scatterplotsize,
	height=\scatterplotsize,
	axis equal image,
	xmin=0.01,
	ymin=0.01,
	ymax=22000,
	xmax=22000,
	xmode=log,
	ymode=log,
	axis x line=bottom,
	axis y line=left,
	xtick={0.01,0.1,1,5,20,100,1000,3000},
	xticklabels={0.01,0.1,1,5,20,100,1000,3000},
	extra x ticks = {10000},
	extra x tick labels = {Timeout},
	extra x tick style = {grid = major},
	ytick={0.01,0.1,1,5,20,100,1000,3000},
	yticklabels={0.01,0.1,1,5,20,100,1000,3000},
	extra y ticks = {10000},
	extra y tick labels = {Timeout},
	extra y tick style = {grid = major},
	xlabel={#3},
	xlabel style={font=\small,yshift=18pt,xshift=-14pt},
	ylabel={#5},
	ylabel style={font=\small,yshift=-0.55cm},
	yticklabel style={font=\tiny},
	xticklabel style={rotate=290,anchor=west,font=\tiny},
	legend pos=north west,
	legend columns=-1,
	legend style={at={(0.4,0.15)},nodes={scale=0.75, transform shape},inner sep=1.5pt},
        clip mode=individual,
	]
	
	\addplot[
	scatter,
	only marks,
	scatter/classes={
		pMC={mark=*,color1,mark size=1.5},
		prMC={mark=triangle*,color2,mark size=1.75}
	},
	scatter src=explicit symbolic
	]%
	table [col sep=semicolon,x=#2,y=#4,meta=Type] {#1};
	\ifthenelse{\NOT\equal{#6}{false}}{\legend{pMC, prMC}}{}
	\addplot[no marks] coordinates {(0.001,0.001) (10000,10000) };
	\addplot[no marks, densely dotted] coordinates {(0.001,0.01) (1000,10000)};
	\addplot[no marks, densely dotted] coordinates {(0.01,0.001) (10000,1000)};

        \draw [latex-] (axis cs:50,500)-- +(-6pt,5pt) node[left, xshift=5pt, yshift=5pt] {$10\times$ faster};
 
	\end{axis}
	\end{tikzpicture}
}
\newcommand{\scatterplotstormB}[6]{%
	\begin{tikzpicture}
	\begin{axis}[
	width=\scatterplotsize,
	height=\scatterplotsize,
	axis equal image,
	xmin=0.001,
	ymin=0.001,
	ymax=22000,
	xmax=22000,
	xmode=log,
	ymode=log,
	axis x line=bottom,
	axis y line=left,
	xtick={0.01,0.1,1,5,20,100,1000,3000},
	xticklabels={0.01,0.1,1,5,20,100,1000,3000},
	extra x ticks = {10000},
	extra x tick labels = {Timeout},
	extra x tick style = {grid = major},
	ytick={0.01,0.1,1,5,20,100,1000,3000},
	yticklabels={0.01,0.1,1,5,20,100,1000,3000},
	extra y ticks = {10000},
	extra y tick labels = {Timeout},
	extra y tick style = {grid = major},
	xlabel={#3},
	xlabel style={font=\small,yshift=18pt,xshift=-5pt},
	ylabel={#5},
	ylabel style={font=\small,yshift=-0.55cm,xshift=-0.2cm},
	yticklabel style={font=\tiny},
	xticklabel style={rotate=290,anchor=west,font=\tiny},
	legend pos=north west,
	legend columns=-1,
	legend style={at={(0.4,0.15)},nodes={scale=0.75, transform shape},inner sep=1.5pt},
        clip mode=individual,
	]
	
	\addplot[
	scatter,
	only marks,
	scatter/classes={
		pMC={mark=*,color1,mark size=1.5},
		prMC={mark=triangle*,color2,mark size=1.75}
	},
	scatter src=explicit symbolic
	]%
	table [col sep=semicolon,x=#2,y=#4,meta=Type] {#1};
	\ifthenelse{\NOT\equal{#6}{false}}{\legend{pMC, prMC}}{}
	\addplot[no marks] coordinates {(0.001,0.001) (10000,10000) };
	\addplot[no marks, densely dotted] coordinates {(0.001,0.01) (1000,10000)};
	\addplot[no marks, densely dotted] coordinates {(0.01,0.001) (10000,1000)};

        \draw [latex-] (axis cs:100,1000)-- +(-6pt,8pt) node[left] {$10\times$ faster};
 
	\end{axis}
	\end{tikzpicture}
}
\newcommand{\tableextrapolationtext}{Extrapolated from the runtimes for $10$ to all $|\Param|$ parameters.}
\newcommand{\tabletimeouttext}{Timeout (1 hour) occurred for verifying the p(r)MC, not for computing derivatives.}
\spnewtheorem{assumption}{Assumption}{\bfseries}{\itshape}
\spnewtheorem*{problemForm*}{Formal problem}{\bfseries}{}
\renewcommand{\paragraph}[1]{\smallskip\noindent\emph{#1}}
\renewcommand{\subsubsection}[1]{\smallskip\noindent\textbf{#1}}
\newif\iftikzcompile
\begin{document}

\title{
Efficient Sensitivity Analysis for \\ Parametric Robust Markov Chains
\thanks{This research has been partially funded by NWO grant NWA.1160.18.238 (PrimaVera), the ERC Starting Grant 101077178 (DEUCE), and grants ONR N00014-21-1-2502 and AFOSR FA9550-22-1-0403.}
}
\author{ 
Thom Badings\inst{1}
\and
Sebastian Junges\inst{1}
\and
Ahmadreza Marandi\inst{2}
\and\\
Ufuk Topcu\inst{3}
\and
Nils Jansen\inst{1}
}

\authorrunning{T. Badings et al.}

 \institute{
Radboud University, Nijmegen, the Netherlands
\\
\email{thom.badings@ru.nl} 
\and
Eindhoven University of Technology, the Netherlands
\and
University of Texas at Austin, USA
}
\maketitle              
\begin{abstract}
We provide a novel method for sensitivity analysis of parametric robust Markov chains. These models incorporate parameters and sets of probability distributions to alleviate the often unrealistic assumption that precise probabilities are available. We measure sensitivity in terms of partial derivatives with respect to the uncertain transition probabilities regarding measures such as the expected reward. As our main contribution, we present an efficient method to compute these partial derivatives. To scale our approach to models with thousands of parameters, we present an extension of this method that selects the subset of $k$ parameters with the highest partial derivative. Our methods are based on linear programming and differentiating these programs around a given value for the parameters. The experiments show the applicability of our approach on models with over a million states and thousands of parameters. Moreover, we embed the results within an iterative learning scheme that profits from having access to a dedicated sensitivity analysis.
\end{abstract}

\setcounter{footnote}{0} 

\section{Introduction}
\label{sec:introduction}

Discrete-time \glspl{MC} are ubiquitous in stochastic systems modeling~\cite{DBLP:books/daglib/BaierKatoen2008}.
A classical assumption is that all probabilities of an MC are precisely known---an assumption that is difficult, if not impossible, to satisfy in practice~\cite{Badings2023_STTT_position}.
\Glspl{rMC}, or uncertain \glspl{MC}, alleviate this assumption by using \emph{sets of probability distributions},
\eg, intervals of probabilities in the simplest case~\cite{DBLP:conf/lics/JonssonL91,DBLP:books/degruyter/Ben-TalGN09}.
A typical verification problem for \glspl{rMC} is to compute upper or lower bounds on measures of interest, such as the expected cumulative reward, under \emph{worst-case realizations} of these probabilities in the set of distributions~\cite{DBLP:conf/cdc/WolffTM12,DBLP:conf/cav/PuggelliLSS13}.
Thus, verification results are \emph{robust} against any selection of probabilities in these sets.

\paragraph{Where to improve my model?}
As a running example, consider a ground vehicle navigating toward a target location in an environment with different terrain types.
On each terrain type, there is some probability that the vehicle will slip and fail to move.
Assume that we obtain a sufficient number of \emph{samples} to infer upper and lower bounds (i.e., intervals) on the slipping probability on each terrain.
We use these probability intervals to model the grid world as an \gls{rMC}.
However, from the \gls{rMC}, it is unclear how our model (and thus the measure of interest) will change if we obtain more samples.
For instance, if we take one more sample for a particular terrain, some of the intervals of the \gls{rMC} will change, but how can we expect the verification result to change? 
And if the verification result is unsatisfactory, for which terrain type should we obtain more samples?

\paragraph{Parametric robust MCs.}
To reason about how additional samples will change our model and thus the verification result, we employ a sensitivity analysis~\cite{DBLP:journals/tse/FilieriTG16}.
To that end, we use \glspl{prMC}, which are \glspl{rMC} whose sets of probability distributions are defined as a function of a set of \emph{parameters}~\cite{DBLP:conf/syncop/Delahaye15}, \eg, intervals with parametric upper/lower bounds.
With these functions over the parameters, we can describe dependencies between the model's states.
The assignment of values to each of the parameters is called an \emph{instantiation}.
Applying an instantiation to a \gls{prMC} induces an \gls{rMC} by replacing each occurrence of the parameters with their assigned values.
For this induced \gls{rMC}, we compute a (robust) value for a given measure, and we call this verification result the \emph{solution} for this instantiation.
Thus, we can associate a \gls{prMC} with a function, called the \emph{solution function}, that maps parameter instantiations to values.

\paragraph{Differentation for prMCs.}
For our running example, we choose the parameters to represent the number of samples we have obtained for each terrain.
Naturally, the \emph{derivative of this solution function} with respect to each parameter (a.k.a. sample size) then corresponds to the expected change in the solution upon obtaining more samples.
Such differentiation for \glspl{pMC}, where parameter instantiations yield one precise probability distribution, has been studied in~\cite{DBLP:conf/vmcai/HeckSJMK22}.
For \glspl{prMC}, however, it is unclear how to compute derivatives and under what conditions the derivative exists.
We thus consider the following problem:
\begin{enumerate}[label=Problem \arabic*, leftmargin=*]
    \item \textit{(Computing derivatives)}. Given a \gls{prMC} and a parameter instantiation, compute the partial derivative of the solution function (evaluated at this instantiation) with respect to each of the parameters.
\end{enumerate}
\paragraph{Our approach.}
We compute derivatives for \glspl{prMC} by solving a parameterized linear optimization problem.
We build upon results from convex optimization theory for differentiating the optimal solution of this optimization problem~\cite{DBLP:books/cu/BV2014,barratt2018differentiability}.
We also present sufficient conditions for the derivative to exist.

\paragraph{Improving efficiency.}
However, computing the derivative for every parameter explicitly does not scale to more realistic models with thousands of parameters.
Instead, we observe that to determine for which parameter we should obtain more samples, we do not need to know \emph{all partial derivatives explicitly}.
Instead, it may suffice to know which parameters have \emph{the highest} (or lowest, depending on the application) derivative.
Thus, we also solve the following (related) problem:
\begin{enumerate}[label=Problem \arabic*, leftmargin=*]
    \setcounter{enumi}{1}
    \item \textit{($k$-highest derivatives)}. Given a \gls{prMC} with $|\Param|$ parameters, determine the $k < |\Param|$ parameters with the highest (or lowest) partial derivative.
\end{enumerate}
We develop novel and efficient methods for solving Problem 2.
Concretely, we design a linear program (LP) that finds the $k$ parameters with the highest (or lowest) partial derivative without computing all derivatives explicitly.
This LP constitutes a polynomial-time algorithm for Problem~2 and is, in practice, \emph{orders of magnitude faster} than computing all derivatives explicitly, especially if the number of parameters is high.
Moreover, if the concrete values for the partial derivatives are required, one can additionally solve Problem~1 for only the resulting $k$ parameters.
In our experiments, we show that we can compute derivatives for models with over a million states and thousands of parameters.

\paragraph{Learning framework.}
Learning in stochastic environments is very data-intensive in general, and millions of samples may be required to obtain sufficiently tight bounds on measures of interest~\cite{kakade2003sample,Moerland2020modelbasedRL}. Several methods exist to obtain intervals on probabilities based on sampling, including statistical methods such as Hoeffding's inequality~\cite{DBLP:books/daglib/Boucheron2013} and Bayesian methods that iteratively update intervals~\cite{Suilen2022Neurips}.
Motivated by this challenge of reducing the sample complexity of learning algorithms, we embed our methods in an iterative learning scheme that profits from having access to sensitivity values for the parameters.
In our experiments, we show that derivative information can be used effectively to guide sampling when learning an unknown Markov chain with hundreds of parameters.

\paragraph{Contributions.}
Our contributions are threefold:
(1)~We present a first algorithm to compute partial derivatives for \glspl{prMC}.
(2)~For both \glspl{pMC} and \glspl{prMC}, we develop an efficient method to determine a subset of parameters with the highest derivatives.
(3)~We apply our methods in an iterative learning scheme.
We give an overview of our approach in \cref{sec:motivation} and formalize the problem statement in \cref{sec:problem}.
In \cref{sec:differentiating_pMCs}, we solve Problems (1) and (2) for pMCs, and in \cref{sec:differentiating_prMCs} for prMCs.
Finally, the learning scheme and experiments are in \cref{sec:experiments}.
\begin{figure*}[t]
    \centering
    \begin{subfigure}[b]{0.26\textwidth}
        \centering

        \iftikzcompile
        \input{tikz/motivating/grid.tex}
        \fi
        
        \caption{
		Grid world.
	}
        \label{fig:motivating:terrain}
    \end{subfigure}
    \begin{subfigure}[b]{0.40\textwidth}
        \centering

        \resizebox{.85\textwidth}{!}{%
\small
\begin{tabularx}{\linewidth}{llllll}
    \toprule
    & & & & \multicolumn{2}{c}{\textbf{Derivatives}} \\
    {Par.} & {True} & {MLE} & {$N$ \,} & $\parder{\hat{f}}{v_i}$ & $\parder{f^+}{N_i}$
    \\
    \midrule
    $v_1$   & 0.25   & 0.50  & 12 & 16.00 & -2.74 \\
    $v_2$   & 0.40   & 0.42  & 36 & 2.93 & -0.02 \\
    $v_3$   & 0.45   & 0.63  & 30 & 0.00 & 0.00 \\
    $v_4$   & 0.50   & 0.53  & 60 & 22.96 & -0.07 \\
    $v_5$   & 0.35   & 0.41  & 22  & 8.59 & -0.16 \\
    \bottomrule
\end{tabularx}
}
        
        \caption{MLEs and derivatives.}
        \label{tab:motivating}
    \end{subfigure}
    \begin{subfigure}[b]{0.3\textwidth}
        \centering

        \iftikzcompile
        \resizebox{.85\textwidth}{!}{%
\begin{tikzpicture}[
                nodestyle/.style={draw,circle},
                loop_above/.style={out=110,in=70,looseness=3,->},
                loop_right/.style={out=30,in=-10,looseness=3,->},
                ]
            \node [nodestyle,initial,initial text=] (s11) at (0,0) {$s_{1}$};
            \node [nodestyle] (s12) [on grid, right=1.6cm of s11] {$s_{2}$};
            
            \node [nodestyle] (s21) [on grid, below=1.5cm of s11] {$s_{3}$};
            \node [nodestyle] (s22) [on grid, below=1.5cm of s12] {$s_{4}$};

            \node (s31) [on grid, below=0.9cm of s21] {$\vdots$};
            \node (s32) [on grid, below=0.9cm of s22] {$\vdots$};

            \node (s13) [on grid, right=0.9cm of s12] {$\hdots$};
            \node (s23) [on grid, right=0.9cm of s22] {$\hdots$};

            \draw[->] (s11) -- node [above] {\scriptsize $0.50$} (s12);
            \draw[->] (s12) -- node [right] {\scriptsize $0.50$} (s22);
            \draw(s11) edge[loop_above] node [above] {\scriptsize $0.50$} (s11);
            \draw(s12) edge[loop_above] node [above] {\scriptsize $0.50$} (s12);

            \draw[->] (s21) -- node [above] {\scriptsize $0.58$} (s22);
            \draw[->] (s22) -- node [right] {\scriptsize $0.50$} ($ (s32) + (0,5pt) $);
            \draw(s21) edge[loop_above] node [above] {\scriptsize $0.42$} (s21);
            \draw(s22) edge[loop_right] node [above, xshift=2pt, yshift=3pt] {\scriptsize $0.50$} (s22);
\end{tikzpicture}
}
        \fi
        
        \caption{Portion of the MC.}
        \label{fig:motivating:mc}
    \end{subfigure}
    
    \caption{Grid world environment (a).
    The vehicle (\includegraphics[scale=0.02]{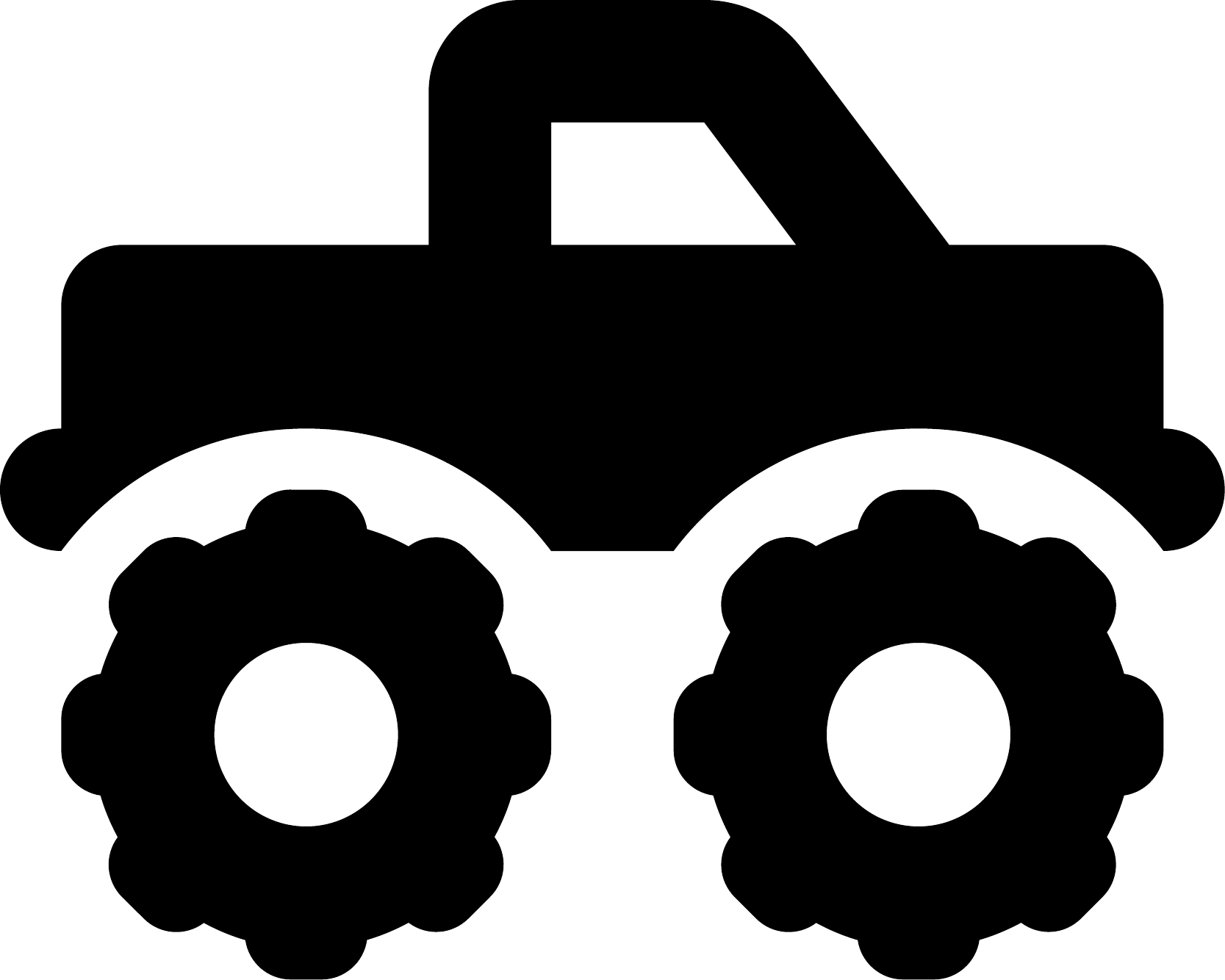}) must deliver the package (\includegraphics[scale=0.02]{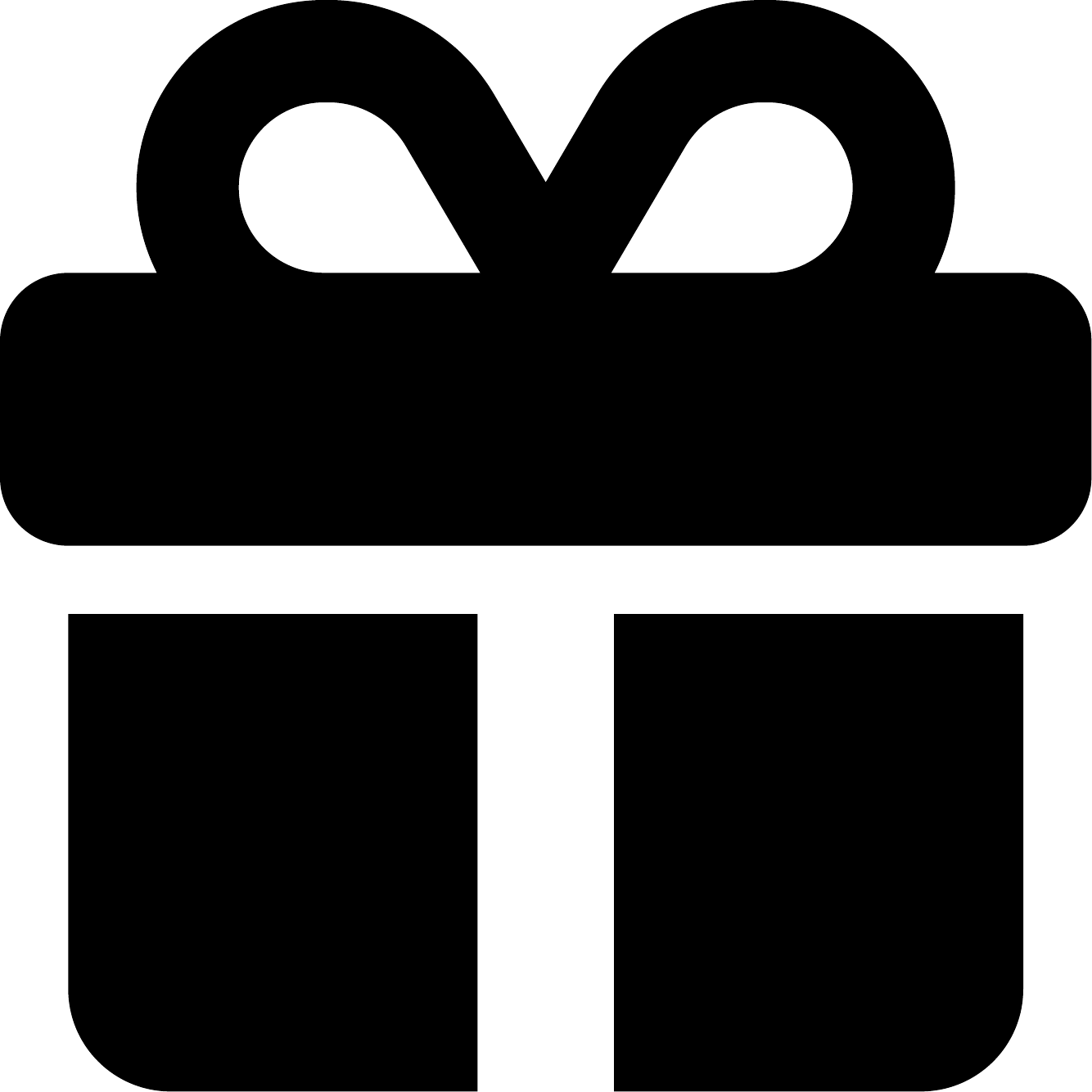}) to the warehouse (\includegraphics[scale=0.02]{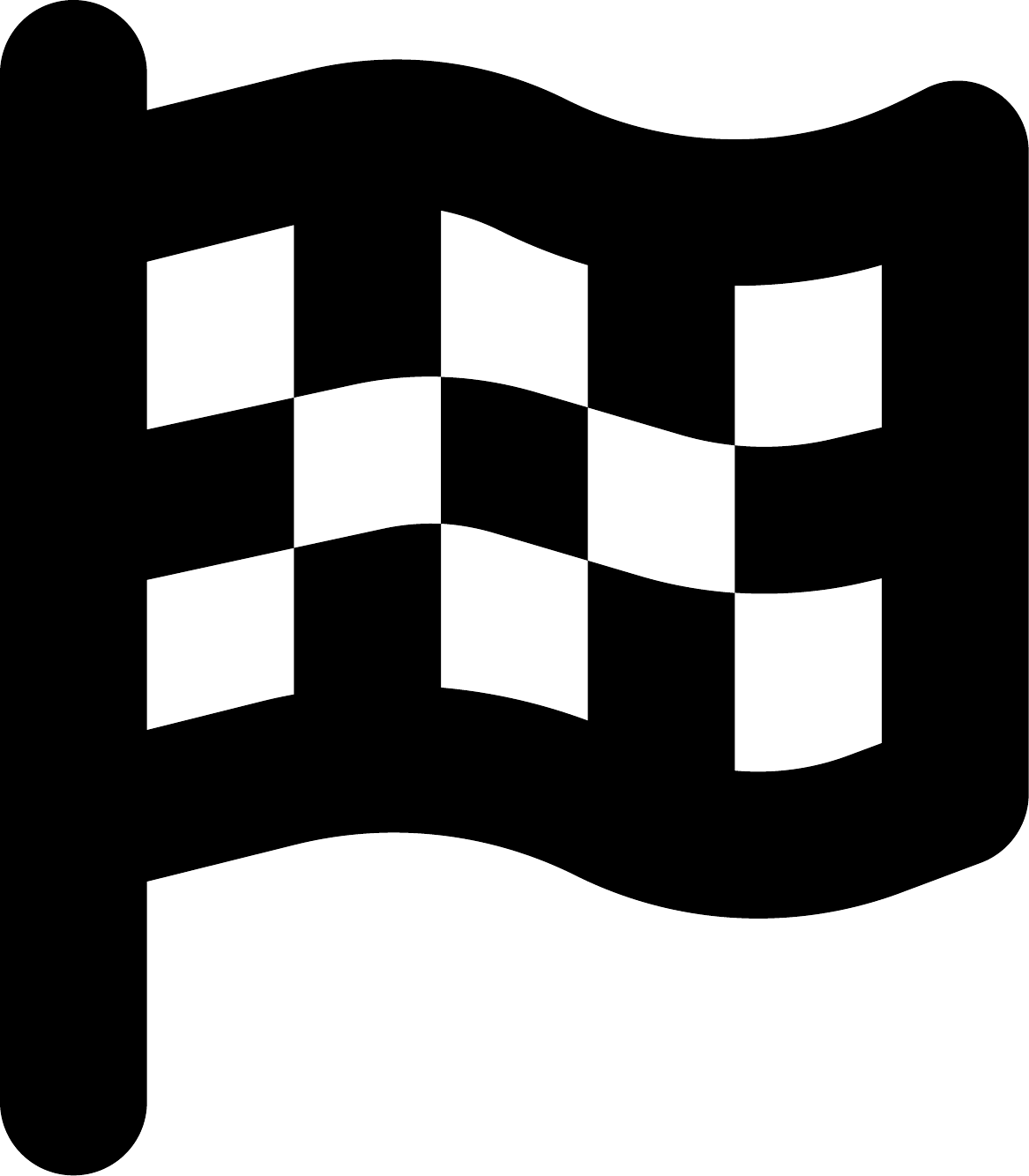}).
    We obtain the MLEs in (b), leading to the MC in (c).}
    \label{fig:motivating}
\end{figure*}

\section{Overview}
\label{sec:motivation}

We expand the example from \cref{sec:introduction} to illustrate our approach more concretely.
The environment, shown in \cref{fig:motivating:terrain}, is partitioned into five regions of the same terrain type.
The vehicle can move in the four cardinal directions.
Recall that the slipping probabilities are the same for all states with the same terrain.
The vehicle follows a dedicated route to collect and deliver a package to a warehouse.
Our goal is to estimate the expected number of steps $f^\star$ to complete the mission. 

\paragraph{Estimating probabilities.}
Classically, we would derive maximum likelihood estimates (MLEs) of the probabilities by sampling.
Consider that, using $N$ samples per slipping probability, we obtained the rough MLEs shown in~\cref{tab:motivating} and thus the MC in~\cref{fig:motivating:mc}.
Verifying the \gls{MC} shows that the expected travel time (called the solution) under these estimates is $\hat{f} = 25.51$ steps, which is far from the travel time of $f^\star = 21.62$ steps under the true slipping probabilities.
We want to close this \emph{verification-to-real gap} by taking more samples for one of the terrain types.
For which of the five terrain types should we obtain more samples? 

\paragraph{Parametric model.}
We can model the grid world as a \gls{pMC}, i.e., an MC with symbolic probabilities.
The solution function for this \gls{pMC} is the travel time $\hat{f}$, being a function of these symbolic probabilities.
We sketch four states of this \gls{pMC} in \cref{fig:motivating:pmc}.
The most relevant parameter is then naturally defined as the parameter with the \emph{largest partial derivative of the solution function}.
As shown in \cref{tab:motivating}, parameter $v_4$ has the highest partial derivative of $\parder{\hat{f}}{v_4} = 22.96$, while the derivative of $v_3$ is zero as no states related to this parameter are ever visited.

\begin{figure}[t!]
\centering
\begin{minipage}{0.35\textwidth}
  \centering
  \iftikzcompile
    \resizebox{.85\linewidth}{!}{%
    \begin{tikzpicture}[
                    nodestyle/.style={draw,circle},
                    loop_above/.style={out=110,in=70,looseness=3,->},
                    loop_right/.style={out=30,in=-10,looseness=3,->},
                    ]
                \node [nodestyle,initial,initial text=] (s11) at (0,0) {$s_{1}$};
                \node [nodestyle] (s12) [on grid, right=1.8cm of s11] {$s_{2}$};
                
                \node [nodestyle] (s21) [on grid, below=1.6cm of s11] {$s_{3}$};
                \node [nodestyle] (s22) [on grid, below=1.6cm of s12] {$s_{4}$};
    
                \node (s31) [on grid, below=0.9cm of s21] {$\vdots$};
                \node (s32) [on grid, below=0.9cm of s22] {$\vdots$};
    
                \node (s13) [on grid, right=0.9cm of s12] {$\hdots$};
                \node (s23) [on grid, right=0.9cm of s22] {$\hdots$};
    
                \draw[->] (s11) -- node [above] {\scriptsize $1-v_1$} (s12);
                \draw[->] (s12) -- node [right] {\scriptsize $1-v_1$} (s22);
                \draw(s11) edge[loop_above] node [above] {\scriptsize $v_1$} (s11);
                \draw(s12) edge[loop_above] node [above] {\scriptsize $v_1$} (s12);
    
                \draw[->] (s21) -- node [above] {\scriptsize $1-v_2$} (s22);
                \draw[->] (s22) -- node [right] {\scriptsize $1-v_1$} ($ (s32) + (0,5pt) $);
                \draw(s21) edge[loop_above] node [above] {\scriptsize $v_2$} (s21);
                \draw(s22) edge[loop_right] node [above, xshift=2pt, yshift=3pt] {\scriptsize $v_1$} (s22);
    \end{tikzpicture}
    }
    \fi
    
    \caption{Parametric MC.}
    \label{fig:motivating:pmc}
\end{minipage}%
\begin{minipage}{0.5\textwidth}
  \centering
    \iftikzcompile
    \resizebox{\linewidth}{!}{%
    \begin{tikzpicture}[
                    nodestyle/.style={draw,circle},
                    loop_above/.style={out=110,in=70,looseness=3,->},
                    loop_right/.style={out=30,in=-10,looseness=3,->},
                    ]
                \node [nodestyle,initial,initial text=] (s11) at (0,0) {$s_{1}$};
                \node [nodestyle] (s12) [on grid, right=3.0cm of s11] {$s_{2}$};
                
                \node [nodestyle] (s21) [on grid, below=1.6cm of s11] {$s_{3}$};
                \node [nodestyle] (s22) [on grid, below=1.6cm of s12] {$s_{4}$};
    
                \node (s31) [on grid, below=0.9cm of s21] {$\vdots$};
                \node (s32) [on grid, below=0.9cm of s22] {$\vdots$};
    
                \node (s13) [on grid, right=0.9cm of s12] {$\hdots$};
                \node (s23) [on grid, right=0.9cm of s22] {$\hdots$};
    
                \draw[->] (s11) -- node [below, align=left] {\scriptsize $[1 - \munderbar{g}(N_1),$ \\ \scriptsize $\quad 1 - \bar{g}(N_1)]$} (s12);
                \draw[->] (s12) -- node [right, pos=0.3] {\scriptsize $[1 - \munderbar{g}(N_1), 1 - \bar{g}(N_1)]$} (s22);
                \draw(s11) edge[loop_above] node [above] {\scriptsize $[\munderbar{g}(N_1), \bar{g}(N_1)]$} (s11);
                \draw(s12) edge[loop_above] node [above] {\scriptsize $[\munderbar{g}(N_1), \bar{g}(N_1)]$} (s12);
    
                \draw[->] (s21) -- node [below, align=left] {\scriptsize $[1 - \munderbar{g}(N_2),$ \\ \scriptsize $\quad 1 - \bar{g}(N_2)]$} (s22);
                \draw[->] (s22) -- node [right] {\scriptsize $[1 - \munderbar{g}(N_1), 1 - \bar{g}(N_1)]$} ($ (s32) + (0,5pt) $);
                \draw(s21) edge[loop_above] node [above] {\scriptsize $[\munderbar{g}(N_2), \bar{g}(N_2)]$} (s21);
                \draw(s22) edge[loop_right] node [above, xshift=20pt, yshift=3pt] {\scriptsize $[\munderbar{g}(N_1), \bar{g}(N_1)]$} (s22);
    \end{tikzpicture}
    }
    \fi
    
    \caption{Parametric robust MC.}
    \label{fig:motivating:prmc}
\end{minipage}
\end{figure}

\paragraph{Parametric robust model.}
The approach above does not account for the uncertainty in each MLE.
Terrain type $v_4$ has the highest derivative but also the largest sample size, so sampling $v_4$ once more has likely less impact than for, e.g., $v_1$.
So, is $v_4$ actually the best choice to obtain additional samples for?
The \gls{prMC} that allows us to answer this question is shown in \cref{fig:motivating:prmc}, where we use (parametric) intervals as uncertainty sets.
The parameters are the sample sizes $N_1, \ldots, N_5$ for all terrain types (contrary to the \gls{pMC}, where parameters represent slipping probabilities).
Now, if we obtain one additional sample for a particular terrain type, how can we expect the uncertainty sets to change?

\paragraph{Derivatives for prMCs.}
We use the \gls{prMC} to compute an upper bound $f^+$ on the true solution $f^\star$.
Obtaining one more sample for terrain type $v_i$ (i.e., increasing $N_i$ by one) shrinks the interval $[\munderbar{g}(N_i), \bar{g}(N_i)]$ on expectation, which in turn decreases our upper bound $f^+$.
Here, $\munderbar{g}$ and $\bar{g}$ are functions mapping sample sizes to interval bounds.
The partial derivatives $\parder{f^+}{N_i}$ for the \gls{prMC} are also shown in \cref{tab:motivating} and give a very different outcome than the derivatives for the \gls{pMC}.
In fact, sampling $v_1$ yields the biggest decrease in the upper bound $f^+$, so we ultimately decide to sample for terrain type $v_1$ instead of $v_4$.

\paragraph{Efficient differentiation.}
We remark that we do not need to know all derivatives explicitly to determine where to obtain samples.
Instead, it suffices to know \emph{which parameter has the highest (or lowest) derivative}.
In the rest of the paper, we develop efficient methods for computing either all or only the $k \in \N$ highest partial derivatives of the solution functions for \glspl{pMC} and \glspl{prMC}.

\paragraph{Supported extensions.}
Our approaches are applicable to general \glspl{pMC} and \glspl{prMC} whose parameters can be shared between distributions (and thus capture dependencies, being a common advantage of parametric models in general~\cite{Junges2019ParameterModels}).
Besides parameters in transition probabilities, we can handle parametric initial states, rewards, and policies.
We could, e.g., use parameters to model the policy of a surveillance drone in our example and compute derivatives for these parameters.
\section{Formal Problem Statement}
\label{sec:problem}

Let $\Param = \{\param_1,\ldots,\param_\ell\}$, $v_i \in \Real$ be a finite and ordered set of parameters.
A parameter instantiation is a function $u \colon \Param \to \Real$ that maps a parameter to a real valuation.
The vector function $\U(v_1, \ldots, v_\ell) = [u(\param_1), \ldots, u(\param_\ell)]^\top \in \Real^\ell$ denotes an ordered instantiation of all parameters in $\Param$ through $u$. 
The set of polynomials over the parameters $V$ is $\Q[\Param]$.
A polynomial $f$ can be interpreted as a function $f \colon \Real^\ell \to \Real$ where $f(\U)$ is obtained by substituting each occurrence of $\param$ by $u(\param)$.
We denote these substitutions with $f[\U]$.

For any set $X$, let $\pfun{X} = \{ f \mid f \colon X \to \Q[\Param] \}$ be the set of functions that map from $X$ to the polynomials over the parameters $\Param$.
We denote by $\pdistr{X} \subset \pfun{X}$ the set of \emph{parametric probability distributions} over $X$, \ie, the functions $f \colon X \to \Q[\Param]$ such that $f(x)[\U] \in [0,1]$ and $\sum_{x \in X} f(x)[\U] = 1$ for all parameter instantiations $\U$. 

\subsubsection{Parametric Markov Chains}
We define a \gls{pMC} as follows:
\begin{definition}[pMC]
    \label{def:pMC}
    A \gls{pMC} $\pmc$ is a tuple $\PMC$, where $\States$ is a finite set of states, $\sI \in \distr{\States}$ a distribution over initial states, $\Param$ a finite set of parameters, and $\transfunc \colon \States \to \pdistr{\States}$ a parametric transition~function.
\end{definition}
Applying an instantiation $\U$ to a \gls{pMC} yields an \gls{MC} $\pmc[\U]$ by replacing each transition probability $f \in \Q[\Param]$ by $f[\U]$.
We consider expected reward measures based on a state reward function $R \colon \States \to \Real$.
Each parameter instantiation for a \gls{pMC} yields an \gls{MC} for which we can compute the solution for the expected reward measure~\cite{DBLP:books/daglib/BaierKatoen2008}.
We call the function that maps instantiations to a solution the \emph{solution function}.
The solution function is smooth over the set of graph-preserving instantiations~\cite{DBLP:journals/jcss/JungesK0W21}.
Concretely, the solution function $\ExpR$ for the expected cumulative reward under instantiation $\U$ is written as follows:
\begin{equation}
    \label{eq:solution_pmc}
    \ExpR (\U) = \sum_{s \in \States} \Big( \sI(s) \sum_{\omega \in \Omega(s)} \text{rew}(\omega) \cdot \Pr(\omega, \U) \Big),
\end{equation}
where $\Omega(s)$ is the set of paths starting in $s \in \States$, $\text{rew}(\omega) = R(s_0) + R(s_1) + \cdots$ is the cumulative reward over $\omega = s_0 s_1 \cdots$, and $\Pr(\omega, \U)$ is the probability for a path $\omega \in \Omega(s)$.
If a terminal (sink) state is reached from state $s \in \States$ with probability one, the infinite sum over $\omega \in \Omega(s)$ in \cref{eq:solution_pmc} exist~\cite{DBLP:books/wi/Puterman94}.

\subsubsection{Parametric Robust Markov Chains}
The convex polytope $T_{A,b} \subseteq \Real^n$ defined by matrix $A \in \Real^{m \times n}$ and vector $b \in \Real^m$ is the set $T_{A,b} = \{ p \in \Real^n \mid Ap \leq b \}$.
We denote by $\F_n$ the set of all convex polytopes of dimension $n$, i.e.,
\begin{equation}
\begin{split}
    \label{eq:polytope_set_concrete}
    \F_n = \{ 
        T_{A,b} \mid A \in \Real^{m\times n},
        \, b \in \Real^m, \, m \in \N
    \}.%
\end{split}%
\end{equation}%
A \glsfirst{rMC}~\cite{DBLP:journals/ior/WiesemannKS14,DBLP:conf/tacas/SenVA06} is a tuple $\RMC$, where $\States$ and $\sI$ are defined as for \glspl{pMC} and the uncertain transition function $\transfuncImdp \colon \States \to \F_{|\States|}$ maps states to convex polytopes $T \in \F_{|\States|}$.
Intuitively, an \gls{rMC} is an \gls{MC} with possibly infinite \emph{sets of probability distributions}.
To obtain robust bounds on the verification result for any of these \glspl{MC}, an \emph{adversary} nondeterministically chooses a precise transition function by fixing a probability distribution $\hat{P}(s) \in \transfuncImdp(s)$ for each $s \in \States$.

We extend \glspl{rMC} with polytopes whose halfspaces are defined by polynomials $\Q[\Param]$ over $\Param$.
To this end, let $\F_n[\Param]$ be the set of all such \emph{parametric polytopes}:
\begin{equation}
\begin{split}
    \label{eq:polytope_set_parametric}
    \F_n[\Param] = \{ 
        T_{A,b} \mid A \in \Q[\Param]^{m\times n},
        \, b \in \Q[\Param]^m, \, m \in \N
    \}.
\end{split}
\end{equation}
An element $T \in \F_n[\Param]$ can be interpreted as a function $T \colon \Real^\ell \to 2^{(\Real^n)}$ that maps an instantiation $\U$ to a (possibly empty) convex polytopic subset of $\Real^n$.
The set $T[\U]$ is obtained by substituting each $\param_i$ in $T$ by $u(\param_i)$ for all $i = 1,\ldots,\ell$.

\begin{example}
    The uncertainty set for state $s_{1}$ of the \gls{prMC} in \cref{fig:motivating:prmc} is the parametric polytope $T \in \F_{2}[\Param]$ with singleton parameter set $\Param = \{ N_1\}$, such that
    \begin{equation*}
    \begin{split}
        T = \big\{ 
            [p_{1,1}, p_{1,2}]^\top \in \Real^2 
            \,\,\big\vert\,\, 
            & \munderbar{g}_1(N_1) \leq p_{1,1} \leq \bar{g}_1(N_1), 
            \enskip
            \\
            & 1 - \bar{g}_1(N_1) \leq p_{1,2} \leq 1 - \munderbar{g}_1(N_1),
            \enskip
            \, p_{1,2} + p_{1,2} = 1
        \big\}.
    \end{split}
    \end{equation*}
\end{example}
We use parametric convex polytopes to define \glspl{prMC}:
\begin{definition}[prMC]
    \label{def:prMC}
    A \gls{prMC} $\prmc$ is a tuple $\PRMC$, where $\States$, $\sI$, and $\Param$ are defined as for \glspl{pMC} (\cref{def:pMC}), and where
    $\transfuncImdp \colon \States \to \F_{|\States|}[\Param]$ is a parametric and uncertain transition function that maps states to parametric convex polytopes.
\end{definition}
Applying an instantiation $\U$ to a prMC yields an \gls{rMC} $\prmc[\U]$ by replacing each parametric polytope $T \in \F_{|\States|}[\Param]$ by $T[\U]$, \ie, a polytope defined by a concrete matrix $A \in \Real^{m \times n}$ and vector $b \in \Real^{m}$.
Without loss of generality, we consider adversaries minimizing the expected cumulative reward until reaching a set of terminal states $\terminalStates \subseteq \States$.
This minimum expected cumulative reward $\ExpRR (\U)$, called the \emph{robust solution} on the instantiated prMC $\prmc[\U]$, is defined as
\begin{equation}
    \label{eq:solution_prmc}
    \ExpRR (\U) = \sum_{s \in \States} \Big( \sI(s) \cdot \min_{P \in \transfuncImdp[\U]} \sum_{\omega \in \Omega(s)} \text{rew}(\omega) \cdot \Pr(\omega, \U, P) \Big).
\end{equation}
We refer to the function $\ExpRR \colon \Real^\ell \to \Real$ as the \emph{robust solution function}.

\paragraph{Assumptions on \glspl{pMC} and \glspl{prMC}.}
For both \glspl{pMC} and \glspl{prMC}, we assume that transitions cannot vanish under any instantiation (graph-preservation).
That is, for every $s,s' \in \States$, we have that $P(s)[\U](s')$ (for \glspl{pMC}) and $\transfuncImdp(s)[\U](s')$ (for \glspl{prMC}) are either zero or strictly positive for all instantiations $\U$.

\subsubsection{Problem statement}
Let $f(q_1, \ldots, q_n) \in \Real^m$ be a differentiable multivariate function with $m \in \N$.
We denote the \emph{partial derivative} of $f$ with respect to $q$ by $\frac{\partial x}{\partial q} \in \Real^m$.
The \emph{gradient} of $f$ combines all partial derivatives in a single vector as $\nabla_q f = [\parder{f}{q_1}, \ldots, \parder{f}{q_n}] \in \Real^{m \times n}$.
We only use gradients $\nabla_{\U} f$ with respect to the parameter instantiation $\U$, so we simply write $\nabla f$ in the remainder.

The gradient of the robust solution function evaluated at the instantiation $\U$ is 
$\grad{\U}{\ExpRR} [\U] = 
\begin{bmatrix}
    \big( \frac{\partial \ExpRR}{\partial u(\param_1)} \big)[\U],
    & \ldots, &
    \big( \frac{\partial \ExpRR}{\partial u(\param_\ell)} \big)[\U]
\end{bmatrix}$.
We solve the following problem.
\begin{mdframed}
\begin{problem}
    \label{prob:formal}
    {\itshape
    Given a prMC $\prmc$ and a parameter instantiation $\U$, compute the gradient $\grad{\U}{\ExpRR}[\U]$ of the robust solution function evaluated at $\U$.
    }
\end{problem}
\end{mdframed}
Solving Problem~\ref{prob:formal} is linear in the number of parameters, which may lead to significant overhead if the number of parameters is large. Typically, it suffices to only obtain the parameters with the highest derivatives:
\begin{mdframed}
\begin{problem}
    \label{prob:formal2}
    {\itshape
    Given a prMC $\prmc$, an instantiation $\U$, and a $k \leq |\Param|$, compute a subset $\Param^\star$ of $k$ parameters for which the partial derivatives are maximal.
    }
\end{problem}
\end{mdframed}
For both problems, we present polynomial-time algorithms for \glspl{pMC} (\cref{sec:differentiating_pMCs}) and \glspl{prMC} (\cref{sec:differentiating_prMCs}). 
\cref{sec:experiments} defines problem variations that we study empirically.
\section{Differentiating Solution Functions for pMCs}
\label{sec:differentiating_pMCs}

We can compute the solution of an \gls{MC} $\pmc[\U]$ with instantiation $\U$ based on a system of $|\States|$ linear equations; here for an expected reward measure~\cite{DBLP:books/daglib/BaierKatoen2008}.
Let $x = [x_{s_1}, \ldots, x_{s_{|\States|}}]^\top$ and $r = [r_{s_1}, \ldots, r_{s_{|\States|}}]^\top$ be variables for the expected cumulative reward and the instantaneous reward in each state $s \in \States$, respectively.
Then, for a set of terminal (\emph{sink}) states $\terminalStates \subset \States$, we obtain the equation system
\begin{subequations}
\begin{alignat}{2}
    & x_s = 0, \quad &&\forall s \in \terminalStates
    \\
    & x_s = r_s + P(s)[\U] x, \quad &&\forall s \in \States \backslash \terminalStates.
\end{alignat}%
\label{eq:pMC_eqsys}%
\end{subequations}%
Let us set $P(s)[\U] = 0$ for all $s \in \terminalStates$ and define  the matrix $P[\U] \in \Real^{|\States| \times |\States|}$ by stacking the rows $P(s)[\U]$ for all $s \in \States$.
Then, \cref{eq:pMC_eqsys} is written in matrix form as $(I_{|\States|} - P[\U]) x = r$.
The equation system in \cref{eq:pMC_eqsys} can be efficiently solved by, \eg, Gaussian elimination or more advanced iterative equation solvers.

\subsection{Computing derivatives explicitly}
We differentiate the equation system in \cref{eq:pMC_eqsys} with respect to an instantiation $u(\param_i)$ for parameter $\param_i \in \Param$,
similar to, e.g.,~\cite{DBLP:conf/vmcai/HeckSJMK22}.
For all $s \in \terminalStates$, the derivative $\parder{x_s}{u(\param_i)}$ is trivially zero.
For all $s \in \States \setminus \terminalStates$, we obtain via the product rule that
\begin{equation}
    \label{eq:pMC:diff_eq_sys1}
    \parder{x_s}{u(\param_i)}
    = \parder{P(s) x}{u(\param_i)} [\U] 
    = (x^\star)^\top \parder{P(s)^\top}{u(\param_i)} [\U]
      + P(s)[\U] \parder{x}{u(\param_i)},
\end{equation}
where $x^\star \in \Real^{|\States|}$ is the solution to \cref{eq:pMC_eqsys}.
In matrix form for all $s \in \States$, this yields
\begin{equation}
    \label{eq:pMC:diff_eq_sys2}
    \left( I_{|\States|} - P[\U] \right) \parder{x}{u(\param_i)}
    = \parder{P x^\star}{u(\param_i)} [\U].
\end{equation}
The solution defined in \cref{eq:solution_pmc} is computed as $\ExpR[\U] = \sI^\top x^\star$.
Thus, the partial derivative of the solution function with respect to $u(\param_i)$ in closed form is
\begin{equation}
    \label{eq:pMC:diff_eq_sys3}
    \left( \parder{\ExpR}{u(\param_i)} \right)[\U] 
    = \sI^\top \parder{x}{u(\param_i)}
    = \sI^\top \left( I_{|\States|} - P[\U] \right)^{-1} \parder{P x^\star}{u(\param_i)} [\U].
\end{equation}
\paragraph{Algorithm for \cref{prob:formal}.}
Let us provide an algorithm to solve \cref{prob:formal} for \glspl{pMC}.
\cref{eq:pMC:diff_eq_sys3} provides a closed-form expression for the partial derivative of the solution function, which is a function of the vector $x^\star$ in \cref{eq:pMC_eqsys}.
However, due to the inversion of $(I_{|\States|} - P[\U])$, it is generally more efficient to solve the system of equations in \cref{eq:pMC:diff_eq_sys2}.
Doing so, the partial derivative of the solution with respect to $u(\param_i)$ is obtained by: (1) solving \cref{eq:pMC_eqsys} with $\U$ to obtain $x^\star \in \Real^{|\States|}$, and (2) solving the equation system in \cref{eq:pMC:diff_eq_sys2} with $|\States|$ unknowns for this vector $x^\star$.
We repeat step 2 for all of the $|\Param|$ parameters.
Thus, we can solve \cref{prob:formal} by solving $|\Param|+1$ linear equation systems with $|\States|$ unknowns each.

\subsection{Computing $k$-highest derivatives}
\label{subsec:pmc_faster}
To solve \cref{prob:formal2} for \glspl{pMC}, we present a method to compute only the $k \leq \ell = |\Param|$ parameters with the highest (or lowest) partial derivative without computing all derivatives explicitly.
Without loss of generality, we focus on the highest derivative.
We can determine these parameters by solving a combinatorial optimization problem with binary variables $z_i \in \{0,1\}$ for $i=1,\ldots,\ell$.
Our goal is to formulate this optimization problem such that an optimal value of $z^\star_i = 1$ implies that parameter $\param_i \in \Param$ belongs to the set of $k$ highest derivatives.
Concretely, we formulate the following \emph{mixed integer linear problem} (MILP)~\cite{wolsey2020integer}:
\begin{subequations}%
\label{eq:pMC_importance_MIP}%
\begin{align}%
    \maximize_{y \in \Real^{|\States|}, \, z \in \{ 0, 1 \}^{\ell}} \,\, & \sI^\top y
    \label{eq:pMC_importance_MIP:obj}
    \\
    \text{subject to} \,\, & 
    \left( I_{|\States|} - P[\U] \right) y = \sum_{i=1}^{\ell} z_i \parder{P x^\star}{u(\param_i)}[\U]
    \label{eq:pMC_importance_MIP:cns1}
    \\
    & z_1 + \cdots + z_{\ell} = k.
    \label{eq:pMC_importance_MIP:cns2}%
\end{align}%
\end{subequations}%
Constraint (\ref{eq:pMC_importance_MIP:cns2}) ensures that any feasible solution to \cref{eq:pMC_importance_MIP} has exactly $k$ nonzero entries.
Since matrix $(I_{|\States|} - P[\U])$ is invertible by construction~(see, e.g., \cite{DBLP:books/wi/Puterman94}), \cref{eq:pMC_importance_MIP} has a unique solution in $y$ for each choice of $z \in \{0, 1\}^{\ell}$.
Thus, the objective value $\sI^\top y$ is the sum of the derivatives for the parameters $\param_i \in \Param$ for which $z_i = 1$.
Since we maximize this objective, an optimal solution $y^\star, z^\star$ to \cref{eq:pMC_importance_MIP} is guaranteed to correspond to the $k$ parameters that maximize the derivative of the solution in \cref{eq:pMC:diff_eq_sys3}.
We state this correctness claim for the MILP:
\begin{proposition}%
    \label{prop:pMC}%
    Let $y^\star$, $z^\star$ be an optimal solution to \cref{eq:pMC_importance_MIP}. 
    Then, the set $\Param^\star = \{ \param_i \in \Param 
    \,\,\vert\,\, 
    z_i^\star = 1 \}$ is a subset of $k \leq \ell$ parameters with maximal derivatives.
\end{proposition}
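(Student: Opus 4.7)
The plan is to reduce the MILP in \cref{eq:pMC_importance_MIP} to the combinatorial problem of selecting the $k$ largest entries from the vector of partial derivatives. The key observation is that the constraint (\ref{eq:pMC_importance_MIP:cns1}) is linear in $z$, and $(I_{|\States|} - P[\U])$ is invertible by the graph-preservation assumption (see, e.g., the argument cited from \cite{DBLP:books/wi/Puterman94}). Hence for every fixed choice of $z \in \{0,1\}^\ell$, the vector $y$ is uniquely determined.

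First, I would unfold this unique $y$ using linearity. From \cref{eq:pMC:diff_eq_sys2}, for each $i = 1,\ldots,\ell$ the vector $\parder{x}{u(\param_i)}$ is the unique solution of $(I_{|\States|} - P[\U])\, w = \parder{P x^\star}{u(\param_i)}[\U]$. Summing these linear systems weighted by $z_i$ and invoking uniqueness of the solution to (\ref{eq:pMC_importance_MIP:cns1}) gives
\begin{equation*}
    y \;=\; \sum_{i=1}^{\ell} z_i \, \parder{x}{u(\param_i)}.
\end{equation*}
Plugging this into the objective and using \cref{eq:pMC:diff_eq_sys3} yields
\begin{equation*}
    \sI^\top y \;=\; \sum_{i=1}^{\ell} z_i \, \sI^\top \parder{x}{u(\param_i)} \;=\; \sum_{i=1}^{\ell} z_i \, \Big(\parder{\ExpR}{u(\param_i)}\Big)[\U].
\end{equation*}

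Next, I would conclude by a standard selection argument. Writing $d_i = (\partial \ExpR / \partial u(\param_i))[\U]$, the MILP (\ref{eq:pMC_importance_MIP}) is equivalent to maximizing $\sum_{i=1}^\ell z_i d_i$ over $z \in \{0,1\}^\ell$ subject to $\sum_{i=1}^\ell z_i = k$. For this cardinality-constrained binary problem, any optimal $z^\star$ has support on an index set of size $k$ achieving the largest sum of $d_i$'s; that support must coincide with a set of $k$ indices whose derivatives are maximal, for otherwise swapping an in-support index with a strictly larger out-of-support derivative would strictly increase the objective, contradicting optimality. Therefore $\Param^\star = \{\param_i : z_i^\star = 1\}$ is a subset of $k$ parameters with maximal derivatives.

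The main subtlety, rather than a deep obstacle, is justifying that the $y$ appearing in (\ref{eq:pMC_importance_MIP:cns1}) can be replaced by the linear combination of the per-parameter solutions; this rests entirely on invertibility of $I_{|\States|} - P[\U]$ and linearity of the equation in both $y$ and $z$. A minor point worth mentioning in the write-up is that ties among the $d_i$ make $\Param^\star$ non-unique, but the statement only claims that $\Param^\star$ is \emph{some} such subset, which the argument above delivers.
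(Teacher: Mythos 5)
Your proposal is correct and follows essentially the same route as the paper: the paper's justification (given in the text preceding the proposition, and made explicit in the reformulation used in the proof of \cref{thm:LP_relaxation}) likewise uses invertibility of $I_{|\States|} - P[\U]$ and linearity to rewrite the objective as $\sum_{i} z_i \big(\parder{\ExpR}{u(\param_i)}\big)[\U]$ and then selects the $k$ largest terms. Your write-up merely spells out the exchange argument and the tie-breaking caveat more explicitly than the paper does.
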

The set $\Param^\star$ may not be unique.
However, to solve \cref{prob:formal2}, it suffices to obtain \emph{a set} of $k$ parameters for which the partial derivatives are maximal.
Therefore, the set $\Param^\star$ provides a solution to \cref{prob:formal2}.
We remark that, to solve \cref{prob:formal2} for the $k$ lowest derivatives, we change the objective in \cref{eq:pMC_importance_MIP:obj} to $\minimize \sI^ \top y$.

\paragraph{Linear relaxation.}
The MILP in \cref{eq:pMC_importance_MIP} is computationally intractable for high values of $\ell$ and $k$.
Instead, we compute the set $v^\star$ via a \emph{linear relaxation} of the MILP.
Specifically, we relax the binary variables $z \in \{0,1\}^\ell$ to continuous variables $z \in [0,1]^\ell$.
As such, we obtain the following LP relaxation of \cref{eq:pMC_importance_MIP}:
\begin{subequations}
\label{eq:pMC_importance_LP}
\begin{align}
    \maximize_{y \in \Real^{|\States|}, \, z \in \Real^{\ell}} \,\, & \sI^\top y
    \label{eq:pMC_importance_LP:obj}
    \\
    \text{subject to} \,\, & 
    \left( I_{|\States|} - P[\U] \right) y = \sum_{i=1}^{\ell} z_i \parder{P x^\star}{u(\param_i)}[\U]
    \label{eq:pMC_importance_LP:cns1}
    \\
    & 0 \leq z_i \leq 1, \quad \forall i = 1,\ldots,\ell
    \label{eq:pMC_importance_LP:cns2}
    \\
    & z_1 + \cdots + z_{\ell} = k.
    \label{eq:pMC_importance_LP:cns3}
\end{align}
\end{subequations}
Denote by $y^+, z^+$ the solution of the LP relaxation in \cref{eq:pMC_importance_LP}.
For details on such linear relaxations of integer problems, we refer to~\cite{DBLP:books/daglib/p/HoffmanK10,Matousek2007}.
In our case, every optimal solution $y^+, z^+$ to the LP relaxation with only binary values $z_i^+ \in \{0, 1\}$ is also optimal for the MILP, resulting in the following theorem.
\begin{theorem}
    \label{thm:LP_relaxation}
    The LP relaxation in \cref{eq:pMC_importance_LP} has an optimal solution $y^+$, $z^+$ with $z^+ \in \{0,1\}^\ell$ (i.e., every optimal variable $z_i^+$ is binary), and every such a solution is also an optimal solution of the MILP in \cref{eq:pMC_importance_MIP}.
\end{theorem}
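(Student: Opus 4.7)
The key observation is that the LP in \cref{eq:pMC_importance_LP} can be reduced to a purely combinatorial LP in the variables $z$ alone. Since $(I_{|\States|} - P[\U])$ is invertible by the assumption of reaching terminal states with probability one, for any feasible $z$ the variable $y$ is uniquely determined by constraint (\ref{eq:pMC_importance_LP:cns1}) as
\begin{equation*}
    y(z) \;=\; (I_{|\States|} - P[\U])^{-1} \sum_{i=1}^{\ell} z_i \parder{P x^\star}{u(\param_i)}[\U].
\end{equation*}
Plugging this into the objective and comparing with the closed form of the partial derivative in \cref{eq:pMC:diff_eq_sys3}, the objective becomes $\sI^\top y(z) = \sum_{i=1}^{\ell} d_i z_i$, where $d_i = \bigl(\parder{\ExpR}{u(\param_i)}\bigr)[\U]$ is exactly the $i$-th partial derivative of the solution function. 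Hence \cref{eq:pMC_importance_LP} is equivalent to
\begin{equation*}
    \max \Bigl\{ \textstyle\sum_{i=1}^{\ell} d_i z_i \;\Bigm|\; z \in [0,1]^\ell, \; \sum_{i=1}^{\ell} z_i = k \Bigr\}.
\end{equation*}

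Next, I would appeal to the standard fact that the polytope $H_{\ell,k} = \{z \in [0,1]^\ell : \sum_i z_i = k\}$ (the hypersimplex, for integer $k$) has exactly the binary vectors with $k$ ones as its vertices; its constraint matrix consists of $\pm I_\ell$ stacked with the all-ones vector, which is totally unimodular, and the right-hand sides are integral. Since the LP is bounded and feasible, it attains its optimum at some vertex, yielding an optimal solution $z^+ \in \{0,1\}^\ell$. Combined with $y^+ = y(z^+)$, this gives the first part of the theorem.

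For the second part, I would argue by the usual LP-relaxation inequality. The MILP \cref{eq:pMC_importance_MIP} is obtained from \cref{eq:pMC_importance_LP} by enforcing integrality on $z$, so every MILP-feasible point is LP-feasible and therefore $\mathrm{OPT}_{\mathrm{MILP}} \leq \mathrm{OPT}_{\mathrm{LP}}$. Conversely, any LP-optimal $(y^+, z^+)$ with $z^+ \in \{0,1\}^\ell$ is itself MILP-feasible and attains the LP value, so $\mathrm{OPT}_{\mathrm{MILP}} \geq \mathrm{OPT}_{\mathrm{LP}}$. Equality follows, and every such binary LP optimum is also MILP-optimal.

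The main obstacle is not any single step but ensuring the reduction is rigorous: invertibility of $(I_{|\States|}-P[\U])$ has to be cited from the graph-preservation assumption and the fact that terminal states are reached almost surely, and the hypersimplex integrality needs to be stated carefully (either via total unimodularity of the constraint matrix or by an explicit exchange argument that rounds a fractional $z$ to a binary one without decreasing $\sum d_i z_i$). Once these two pieces are in place, the rest of the argument is a short chain of equalities and inclusions.
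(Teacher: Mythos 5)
Your proposal is correct and follows essentially the same route as the paper's proof: eliminate $y$ via invertibility of $(I_{|\States|} - P[\U])$ to reduce the LP to a combinatorial problem over $z$ on the hypersimplex, then invoke integrality of that polytope's vertices to obtain a binary optimum. The paper simply cites the integer-vertex property rather than deriving it from total unimodularity, and leaves the final LP-vs-MILP sandwich argument implicit, but the substance is identical.
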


\begin{proof}
From invertibility of $\left( I_{|\States|} - P[\U] \right)$, we know that \cref{eq:pMC_importance_MIP} is equivalent to 
\begin{subequations}
\label{eq:pMC_importance_LP_reformulation}
\begin{align}
    \maximize_{ z \in \{ 0, 1 \}^{\ell}} \,\, &  \sum_{i=1}^{\ell} z_i \left( \sI^\top \left( I_{|\States|} - P[\U] \right)^{-1}\parder{P x^\star}{u(\param_i)}[\U]\right)
    \\
    \text{subject to} \,\, 
    & z_1 + \cdots + z_{\ell} = k.
\end{align}%
\end{subequations}%
The linear relaxation of \cref{eq:pMC_importance_LP_reformulation} is an LP whose feasible region has integer vertices (see, e.g., \cite{hoffman2010integral}). 
Therefore, both \cref{eq:pMC_importance_LP_reformulation} and its relaxation \cref{eq:pMC_importance_LP} have an integer optimal solution $z^+$, which constructs $z^\star$ in \cref{eq:pMC_importance_MIP}. 
\qed
\end{proof}

The binary solutions $z^+ \in \{0,1\}^\ell$ are the vertices of the feasible set of the LP in \cref{eq:pMC_importance_LP}. 
A simplex-based LP solver can be set to return such a solution.\footnote{Even if a non-vertex solution $y^+,z^+$ is obtained, we can use an arbitrary tie-break rule on $z^+$, which forces each $z^+_i$ binary and preserves the sum in \cref{eq:pMC_importance_LP:cns3}.}

\paragraph{Algorithm for \cref{prob:formal2}.}
We provide an algorithm to solve \cref{prob:formal2} for \glspl{pMC} consisting of two steps.
First, for \gls{pMC} $\pmc$ and parameter instantiation $\U$, we solve the linear equation system in \cref{eq:pMC:diff_eq_sys2} for $x^\star$ to obtain the solution $\ExpR[\U] = \sI^\top x^\star$.
Second, we fix a number of parameters $k \leq \ell$ and solve the LP relaxation in \cref{eq:pMC_importance_LP}.
The set $\Param^\star$ of parameters with maximal derivatives is then obtained as defined in \cref{prop:pMC}.
The parameter set $\Param^\star$ is a solution to \cref{prob:formal2}.
\section{Differentiating Solution Functions for prMCs}
\label{sec:differentiating_prMCs}

We shift focus to \glspl{prMC}.
Recall that solutions $\ExpRR[\U]$ are computed for the worst-case realization of the uncertainty, called the robust solution.
We derive the following equation system, where, 
as for \glspl{pMC}, $x \in \Real^{|\States|}$ represents the expected cumulative reward in each state.
\begin{subequations}
\begin{alignat}{2}
    & x_s = 0, &&\forall s \in \terminalStates
    \label{eq:prMC_eqsys_constr_terminal}
    \\
    & x_s = r_s + \inf_{p \in \transfuncImdp(s)[\U]} \left( p^\top x \right), \quad &&\forall s \in \States \setminus \terminalStates.
    \label{eq:prMC_eqsys_constr}
\end{alignat}%
\label{eq:prMC_eqsys}%
\end{subequations}%
Solving \cref{eq:prMC_eqsys} directly corresponds to solving a system of nonlinear equations due to the inner infimum in \cref{eq:prMC_eqsys_constr}.
The standard approach from robust optimization~\cite{DBLP:books/degruyter/Ben-TalGN09} is to leverage the dual problem for each inner infimum, e.g., as is done in~\cite{DBLP:conf/cav/PuggelliLSS13,DBLP:conf/concur/ChenFRS14}.
For each $s \in \States$, $\transfuncImdp(s)$ is a parametric convex polytope $T_{A,b}$ as defined in \cref{eq:polytope_set_parametric}.
The dimensionality of this polytope depends on the number of successor states, which is typically much lower than the total number of states.
To make the number of successor states explicit, we denote by $\post{s} \subseteq \States$ the successor states of $s \in \States$ and define $T_{A,b} \in \F_{|\post{s}|}[\Param]$ with $A_s \in \Q^{m_s \times |\post{s}|}$ and $b_s[\U] \in \Q^{m_s}$ (recall $m_s$ is the number of halfspaces of the polytope).
Then, the infimum in \cref{eq:prMC_eqsys_constr} for each $s \in \States \setminus \terminalStates$ is
\begin{subequations}
\begin{align}
    \minimize \,\, & p^\top x
    \\
    \text{subject to} \,\, 
    & A_s[\U] p \leq b_s[\U] \\
    & \mathbbm{1}^\top p = 1,
\end{align}%
\label{eq:inner_inf1}%
\end{subequations}%
where $\mathbbm{1}$ denotes a column vector of ones of appropriate size.
Let $x_{\post{s}} = [x_s]_{s \in \post{s}}$ be the vector of decision variables corresponding to the (ordered) successor states in $\post{s}$.
The dual problem of \cref{eq:inner_inf1}, with dual variables $\alpha \in \Real^{m_s}$ and $\beta \in \Real$ (see, e.g.,~~\cite{bazaraa2011linear} for details), is written as follows:
\begin{subequations}
\begin{align}
    \maximize\enskip & {-}b_s[\U]^\top \alpha - \beta
    \\
    \text{subject to} \,\, & A_s[\U]^\top \alpha + x_{\post{s}} + \beta\mathbbm{1} = 0
    \\
    & \alpha \geq 0.
\end{align}%
\label{eq:inf_dual_problem}%
\end{subequations}%
By using this dual problem in \cref{eq:prMC_eqsys_constr}, we obtain the following LP with decision variables $x \in \Real^{|\States|}$, and with $\alpha_{s} \in \Real^{m_{s}}$ and $\beta_{s} \in \Real$ for every $s \in \States$:
\begin{subequations}
\label{eq:prmc_LP}
\begin{alignat}{2}
    \label{eq:prmc_LP_obj}
    \maximize \,\, & \sI^\top x
    \\
    \text{subject to} \,\, 
    & x_s = 0, &&\forall s \in \terminalStates
    \label{eq:prmc_LP_constr1}
    \\
    & 
    x_s = r_s - \left( b_{s}[\U]^\top \alpha_{s} + \beta_{s} \right),
    \enskip &&\forall s \in \States \setminus \terminalStates
    \label{eq:prmc_LP_constr2}
    \\    
    & A_{s}[\U]^\top \alpha_{s} + x_{\post{s}} + \beta_{s}\mathbbm{1} = 0, \quad \alpha_{s} \geq 0, \quad && \forall s \in \States \setminus \terminalStates.
\end{alignat}%
\end{subequations}%
The reformulation of \cref{eq:prMC_eqsys} to \cref{eq:prmc_LP} requires that $\sI \geq 0$, which is trivially satisfied because $\sI$ is a probability distribution.
Denote by $x^\star, \alpha^\star, \beta^\star$ an optimal point of \cref{eq:prmc_LP}.
The $x^\star$ element of this optimum is also an optimal solution of \cref{eq:prMC_eqsys}~\cite{DBLP:books/degruyter/Ben-TalGN09}.
Thus, the robust solution defined in \cref{eq:solution_prmc} is $\ExpRR[\U] = \sI^\top x^\star$.

\subsection{Computing derivatives via pMCs (and when it does not work)}
\label{sec:differentiating_prMCs:intuition}
Toward solving \cref{prob:formal}, we provide some intuition about computing robust solutions for \glspl{prMC}.
The infimum in \cref{eq:prMC_eqsys_constr} finds the \emph{worst-case} point $p^\star$ in each set $\transfuncImdp(s)[\U]$ that minimizes ${(p^\star)}^\top x$.
This minimization is visualized in \cref{fig:uncertainty_set1} for an uncertainty set that captures three probability intervals $\munderbar{p}_i \leq p_i \leq \bar{p}_i, \, i=1,2,3$.
Given the optimization direction $x$ (arrow in \cref{fig:uncertainty_set1}), the point $p^\star$ (red dot) is attained at the vertex where the constraints $\munderbar{p}_1 \leq p_1$ and $\munderbar{p}_2 \leq p_2$ are active.\footnote{An inequality constraint $g x \leq h$ is active under the optimal solution $x^\star$ if $g x^\star = h$~\cite{DBLP:books/cu/BV2014}.}
Thus, we obtain that the point in the polytope that minimizes ${(p^\star)}^\top x$ is $p^\star = [\munderbar{p}_1, \, \munderbar{p}_2, \, 1-\munderbar{p}_1-\munderbar{p}_2]^\top$.
Using this procedure, we can obtain a worst-case point $p^\star_s$ for each state $s \in \States$.
We can use these points to convert the \gls{prMC} into an induced \gls{pMC} with transition function $\transfunc(s) = p^\star_s$ for each state $s \in \States$.

\newcommand\pgfmathsinandcos[3]{%
  \pgfmathsetmacro#1{sin(#3)}%
  \pgfmathsetmacro#2{cos(#3)}%
}

\begin{figure*}[t]
    \centering
    \begin{subfigure}[b]{0.31\textwidth}
        \centering

        \iftikzcompile
        \begin{tikzpicture}[scale=1.5] 

            \pgfmathsetmacro\AngleFuite{150}
            \pgfmathsetmacro\coeffReduc{.8}
            \pgfmathsetmacro\clen{2}
            \pgfmathsinandcos\sint\cost{\AngleFuite}
            
            \begin{scope} [x     = {(\coeffReduc*\cost,-\coeffReduc*\sint)},
                           y     = {(1cm,0cm)}, 
                           z     = {(0cm,1cm)}]
                
                \newcommand\minA{0.15}
                \newcommand\maxA{0.7}
                \newcommand\minB{0.2}
                \newcommand\maxB{0.55}
                \newcommand\minC{0.2}
                \newcommand\maxC{0.75}
        
                \path coordinate (O) at (0,0,0)
                      coordinate (A) at (1,0,0)
                      coordinate (B) at (0,1,0)
                      coordinate (C) at (0,0,1);
                \path coordinate (A_min_1) at (\minA,1-\minA,0)
                      coordinate (A_min_2) at (\minA,0,1-\minA)
                      coordinate (B_min_1) at (1-\minB,\minB,0)
                      coordinate (B_min_2) at (0,\minB,1-\minB)
                      coordinate (C_min_1) at (1-\minC,0,\minC)
                      coordinate (C_min_2) at (0,1-\minC,\minC);
                \path coordinate (A_max_1) at (\maxA,1-\maxA,0)
                      coordinate (A_max_2) at (\maxA,0,1-\maxA)
                      coordinate (B_max_1) at (1-\maxB,\maxB,0)
                      coordinate (B_max_2) at (0,\maxB,1-\maxB)
                      coordinate (C_max_1) at (1-\maxC,0,\maxC)
                      coordinate (C_max_2) at (0,1-\maxC,\maxC);
                
                \draw[gray] (A)--(B)--(C)--cycle;
        
                \draw[fill=gray] (A) circle (1.5pt) node[below, yshift=-0.1cm, xshift=0.2cm] {\small $P_{1}=1$};
                \draw[fill=gray] (B) circle (1.5pt) node[below] {$\small P_{2}=1$};
                \draw[fill=gray] (C) circle (1.5pt) node[right, yshift=0.1cm] {$\small P_{3}=1$};
                
                \draw[thick,-stealth,black] (O) -- (0,0,1);
                \draw[thick,-stealth,black] (O) -- (1,0,0);
                \draw[thick,-stealth,black] (O) -- (0,1,0);
        
                \draw[fill=red] (0.15, 0.2, 0.65) circle (1.5pt) node[red, right] {\small $p^\star$};
                
                \draw[name path=Amin, densely dotted, thick, red] (A_min_1) -- (A_min_2) node[left,yshift=0.1cm] {\scriptsize $\munderbar{p}_{1}$};
                \draw[name path=Bmin, densely dotted, thick, red] (B_min_2) -- (B_min_1) node[below, xshift=0.2cm] {\scriptsize $\munderbar{p}_{2}$};
                \draw[name path=Cmin, densely dotted, gray] (C_min_1) -- (C_min_2) node[right,yshift=0.1cm] {\scriptsize $\munderbar{p}_{3}$};
                
                \draw[name path=Amax, densely dotted, gray] (A_max_1) -- (A_max_2) node[left,yshift=0.1cm] {\scriptsize $\bar{p}_{1}$};
                \draw[name path=Bmax, densely dotted, gray] (B_max_2) -- (B_max_1) node[below, xshift=0.2cm] {\scriptsize $\bar{p}_{2}$};
                \draw[name path=Cmax, densely dotted, gray] (C_max_1) -- (C_max_2) node[right] {\scriptsize $\bar{p}_{3}$};

                \clip (A_min_1) -- (A_min_2) -- (A) -- cycle;
                \clip (A_max_1) -- (A_max_2) -- (C) -- (B) -- cycle;
                
                \clip (B_min_1) -- (B_min_2) -- (B) -- cycle;
                \clip (B_max_1) -- (B_max_2) -- (C) -- (A) -- cycle;
                
                \clip (C_min_1) -- (C_min_2) -- (C) -- cycle;
                \clip (C_max_1) -- (C_max_2) -- (B) -- (A) -- cycle;
                
                \fill [draw=black, fill=blue!30, fill opacity = .9]   (A)--(B)--(C)--cycle;
                
            \end{scope}  
        
            \draw[-latex, ultra thick] (-0.87,0.7) -- +(0.07,-0.4) node[midway, right] {$x$};
            
        \end{tikzpicture}   
        \fi
        
        \caption{Well-defined optimum.}
        \label{fig:uncertainty_set1}
    \end{subfigure}
    \begin{subfigure}[b]{0.31\textwidth}
        \centering

        \iftikzcompile
        \begin{tikzpicture}[scale=1.5] 

            \pgfmathsetmacro\AngleFuite{150}
            \pgfmathsetmacro\coeffReduc{.8}
            \pgfmathsetmacro\clen{2}
            \pgfmathsinandcos\sint\cost{\AngleFuite}
            
            \begin{scope} [x     = {(\coeffReduc*\cost,-\coeffReduc*\sint)},
                           y     = {(1cm,0cm)}, 
                           z     = {(0cm,1cm)}]
                
                \newcommand\minA{0.15}
                \newcommand\maxA{0.7}
                \newcommand\minB{0.2}
                \newcommand\maxB{0.55}
                \newcommand\minC{0.2}
                \newcommand\maxC{0.75}
        
                \path coordinate (O) at (0,0,0)
                      coordinate (A) at (1,0,0)
                      coordinate (B) at (0,1,0)
                      coordinate (C) at (0,0,1);
                \path coordinate (A_min_1) at (\minA,1-\minA,0)
                      coordinate (A_min_2) at (\minA,0,1-\minA)
                      coordinate (B_min_1) at (1-\minB,\minB,0)
                      coordinate (B_min_2) at (0,\minB,1-\minB)
                      coordinate (C_min_1) at (1-\minC,0,\minC)
                      coordinate (C_min_2) at (0,1-\minC,\minC);
                \path coordinate (A_max_1) at (\maxA,1-\maxA,0)
                      coordinate (A_max_2) at (\maxA,0,1-\maxA)
                      coordinate (B_max_1) at (1-\maxB,\maxB,0)
                      coordinate (B_max_2) at (0,\maxB,1-\maxB)
                      coordinate (C_max_1) at (1-\maxC,0,\maxC)
                      coordinate (C_max_2) at (0,1-\maxC,\maxC);
                
                \draw[gray] (A)--(B)--(C)--cycle;
        
                \draw[fill=gray] (A) circle (1.5pt) node[below, yshift=-0.1cm, xshift=0.2cm] {\small $P_{1}=1$};
                \draw[fill=gray] (B) circle (1.5pt) node[below] {$\small P_{2}=1$};
                \draw[fill=gray] (C) circle (1.5pt) node[right, yshift=0.1cm] {$\small P_{3}=1$};
                
                \draw[thick,-stealth,black] (O) -- (0,0,1);
                \draw[thick,-stealth,black] (O) -- (1,0,0);
                \draw[thick,-stealth,black] (O) -- (0,1,0);
        
                \draw[fill=red] (0.35, 0.2, 0.45) circle (1.5pt) node[red, left] {\small $p^\star$};
                
                \draw[name path=Amin, densely dotted, gray] (A_min_1) -- (A_min_2) node[left,yshift=0.1cm] {\scriptsize $\munderbar{p}_{1}$};
                \draw[name path=Bmin, densely dotted, thick, red] (B_min_2) -- (B_min_1) node[below, xshift=0.2cm] {\scriptsize $\munderbar{p}_{2}$};
                \draw[name path=Cmin, densely dotted, gray] (C_min_1) -- (C_min_2) node[right,yshift=0.1cm] {\scriptsize $\munderbar{p}_{3}$};
                
                \draw[name path=Amax, densely dotted, gray] (A_max_1) -- (A_max_2) node[left,yshift=0.1cm] {\scriptsize $\bar{p}_{1}$};
                \draw[name path=Bmax, densely dotted, gray] (B_max_2) -- (B_max_1) node[below, xshift=0.2cm] {\scriptsize $\bar{p}_{2}$};
                \draw[name path=Cmax, densely dotted, gray] (C_max_1) -- (C_max_2) node[right] {\scriptsize $\bar{p}_{3}$};

                \clip (A_min_1) -- (A_min_2) -- (A) -- cycle;
                \clip (A_max_1) -- (A_max_2) -- (C) -- (B) -- cycle;
                
                \clip (B_min_1) -- (B_min_2) -- (B) -- cycle;
                \clip (B_max_1) -- (B_max_2) -- (C) -- (A) -- cycle;
                
                \clip (C_min_1) -- (C_min_2) -- (C) -- cycle;
                \clip (C_max_1) -- (C_max_2) -- (B) -- (A) -- cycle;
                
                \fill [draw=black, fill=blue!30, fill opacity = .9]   (A)--(B)--(C)--cycle;
                
            \end{scope}  
        
            \draw[-latex, ultra thick] (-0.87,0.7) -- +(0.4,-0.18) node[midway, above] {$x$};
            
        \end{tikzpicture}   
        \fi
        
        \caption{Non-unique optimum.}
        \label{fig:uncertainty_set2}
    \end{subfigure}
    \begin{subfigure}[b]{0.36\textwidth}
        \centering

        \iftikzcompile
        \begin{tikzpicture}[scale=1.5] 

            \pgfmathsetmacro\AngleFuite{150}
            \pgfmathsetmacro\coeffReduc{.8}
            \pgfmathsetmacro\clen{2}
            \pgfmathsinandcos\sint\cost{\AngleFuite}
            
            \begin{scope} [x     = {(\coeffReduc*\cost,-\coeffReduc*\sint)},
                           y     = {(1cm,0cm)}, 
                           z     = {(0cm,1cm)}]
                
                \newcommand\minA{0.15}
                \newcommand\maxA{0.7}
                \newcommand\minB{0.2}
                \newcommand\maxB{0.55}
                \newcommand\minC{0.2}
                \newcommand\maxC{0.65}
        
                \path coordinate (O) at (0,0,0)
                      coordinate (A) at (1,0,0)
                      coordinate (B) at (0,1,0)
                      coordinate (C) at (0,0,1);
                \path coordinate (A_min_1) at (\minA,1-\minA,0)
                      coordinate (A_min_2) at (\minA,0,1-\minA)
                      coordinate (B_min_1) at (1-\minB,\minB,0)
                      coordinate (B_min_2) at (0,\minB,1-\minB)
                      coordinate (C_min_1) at (1-\minC,0,\minC)
                      coordinate (C_min_2) at (0,1-\minC,\minC);
                \path coordinate (A_max_1) at (\maxA,1-\maxA,0)
                      coordinate (A_max_2) at (\maxA,0,1-\maxA)
                      coordinate (B_max_1) at (1-\maxB,\maxB,0)
                      coordinate (B_max_2) at (0,\maxB,1-\maxB)
                      coordinate (C_max_1) at (1-\maxC,0,\maxC)
                      coordinate (C_max_2) at (0,1-\maxC,\maxC);
                
                \draw[gray] (A)--(B)--(C)--cycle;
        
                \draw[fill=gray] (A) circle (1.5pt) node[below, yshift=-0.1cm, xshift=0.2cm] {\small $P_{1}=1$};
                \draw[fill=gray] (B) circle (1.5pt) node[below] {$\small P_{2}=1$};
                \draw[fill=gray] (C) circle (1.5pt) node[right, yshift=0.1cm] {$\small P_{3}=1$};
                
                \draw[thick,-stealth,black] (O) -- (0,0,1);
                \draw[thick,-stealth,black] (O) -- (1,0,0);
                \draw[thick,-stealth,black] (O) -- (0,1,0);
        
                \draw[fill=red] (0.15, 0.2, 0.65) circle (1.5pt) node[red, right] {\small $p^\star$};
                
                \draw[name path=Amin, densely dotted, thick, red] (A_min_1) -- (A_min_2) node[left,yshift=0.1cm] {\scriptsize $\munderbar{p}_{1}$};
                \draw[name path=Bmin, densely dotted, thick, red] (B_min_2) -- (B_min_1) node[below, xshift=0.2cm] {\scriptsize $\munderbar{p}_{2}$};
                \draw[name path=Cmin, densely dotted, gray] (C_min_1) -- (C_min_2) node[right,yshift=0.1cm] {\scriptsize $\munderbar{p}_{3}$};
                
                \draw[name path=Amax, densely dotted, gray] (A_max_1) -- (A_max_2) node[left,yshift=0.1cm] {\scriptsize $\bar{p}_{1}$};
                \draw[name path=Bmax, densely dotted, gray] (B_max_2) -- (B_max_1) node[below, xshift=0.2cm] {\scriptsize $\bar{p}_{2}$};
                \draw[name path=Cmax, densely dotted, thick, red] (C_max_1) -- (C_max_2) node[right] {\scriptsize $\bar{p}_{3}$};

                \clip (A_min_1) -- (A_min_2) -- (A) -- cycle;
                \clip (A_max_1) -- (A_max_2) -- (C) -- (B) -- cycle;
                
                \clip (B_min_1) -- (B_min_2) -- (B) -- cycle;
                \clip (B_max_1) -- (B_max_2) -- (C) -- (A) -- cycle;
                
                \clip (C_min_1) -- (C_min_2) -- (C) -- cycle;
                \clip (C_max_1) -- (C_max_2) -- (B) -- (A) -- cycle;
                
                \fill [draw=black, fill=blue!30, fill opacity = .9]   (A)--(B)--(C)--cycle;
                
            \end{scope}  
        
            \draw[-latex, ultra thick] (-0.87,0.7) -- +(0.07,-0.4) node[midway, right] {$x$};
            
        \end{tikzpicture}   
        \fi
                
        \caption{Too many active constraints.}
        \label{fig:uncertainty_set3}
    \end{subfigure}
    
    \caption{Three polytopic uncertainty sets (blue shade), with the vector $x$, the worst-case points $p^\star$, and the active constraints shown in red.}
    \label{fig:uncertainty_set}
\end{figure*}
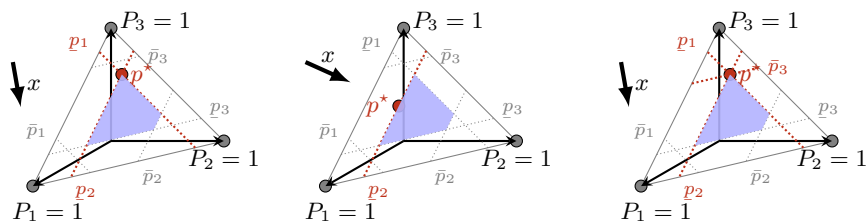

For small changes in the parameters, the point $p^\star$ in \cref{fig:uncertainty_set1} changes smoothly, and its closed-form expression (\ie, the functional form) remains the same.
As such, it feels intuitive that we could apply the methods from \cref{sec:differentiating_pMCs} to compute partial derivatives on the induced \gls{pMC}.
However, this approach does not always work, as illustrated by the following two corner cases.
\begin{enumerate}
    \item Consider \cref{fig:uncertainty_set2}, where the optimization direction defined by $x$ is parallel to one of the facets of the uncertainty set.
    In this case, the worst-case point $p^\star$ is not unique, but an infinitesimal change in the optimization direction $x$ will force the point to one of the vertices again.
    Which point should we choose to obtain the induced \gls{pMC} (and does this choice affect the derivative)?
    \item Consider \cref{fig:uncertainty_set3} with more than $|\States|-1$ active constraints at the point $p^\star$.
    Observe that decreasing $\bar{p}_3$ changes the point $p^\star$ while increasing $\bar{p}_3$ does not.
    In fact, the optimal point $p^\star$ changes \emph{non-smoothly} with the halfspaces of the polytope.
    As a result, also the solution changes non-smoothly, and thus, the derivative is not defined.
    How do we deal with such a situation?
\end{enumerate}
These examples show that computing derivatives via an induced \gls{pMC} by obtaining each point $p^\star_s$ can be tricky or is, in some cases, not possible at all.
In what follows, we present a method that directly derives a set of linear equations to obtain derivatives for \glspl{prMC} (all or only the $k$ highest) based on the solution to the LP in \cref{eq:prmc_LP}, which intrinsically identifies the corner cases above in which the derivative is not defined.

\subsection{Computing derivatives explicitly}

We now develop a dedicated method for identifying if the derivative of the solution function for a \gls{prMC} exists, and if so, to compute this derivative.
Observe from \cref{fig:uncertainty_set} that the point $p^\star$ is uniquely defined and has a smooth derivative only in \cref{fig:uncertainty_set1} with two active constraints.
For only one active constraint (\cref{fig:uncertainty_set2}), the point is \emph{underdetermined}, while for three active constraints (\cref{fig:uncertainty_set3}), the derivative may \emph{not be smooth}.
In the general case, having exactly $n-1$ active constraints (whose facets are nonparallel) is a sufficient condition for obtaining a unique and smoothly changing point $p^\star$ in the $n$-dimensional probability simplex.

\paragraph{Optimal dual variables.}
The optimal dual variables $\alpha_s^\star \geq 0$ for each $s \in \States \setminus \terminalStates$ in \cref{eq:prmc_LP} indicate which constraints of the polytope $A_s[\U] p \leq b_s[\U]$ are active, i.e., for which rows $a_{s,i}[\U]$ of $A_s[\U]$ it holds that $a_{s,i}[\U] p^\star = b_s[\U]$.
Specifically, a value of $\alpha_{s,i} > 0$ implies that the $i^\text{th}$ constraint is active, and $\alpha_{s,i} = 0$ indicates a nonactive constraint~\cite{DBLP:books/cu/BV2014}.
We define $E_s = [e_1, \ldots, e_{m_s}] \in \{0, 1\}^{m_s}$ as a vector whose binary values $e_i \,\forall i \in \{1,\ldots,m_s\}$ are given as $e_i = \llbracket\alpha^\star_{s,i} > 0\rrbracket$.\footnote{We use Iverson-brackets: $\llbracket x\rrbracket = 1$ if $x$ is true and $\llbracket x \rrbracket=0$ otherwise.} 
Moreover, denote by $\diag{E_s}$ the matrix with $E_s$ on the diagonal and zeros elsewhere.
We reduce the LP in \cref{eq:prmc_LP} to a system of linear equations that encodes only the constraints that are active under the worst-case point $p^\star_s$ for each $s \in \States \setminus \terminalStates$:
\begin{subequations}
\label{eq:prmc_LP_reduced}
\begin{alignat}{2}
    & x_s = 0, &&\forall s \in \terminalStates
    \label{eq:prmc_LP_red_constr1}
    \\
    & 
    x_s = r_s - \left( b_{s}[\U]^\top \diag{E_s} \alpha_{s} + \beta_{s} \right),
    \enskip &&\forall s \in \States \setminus \terminalStates
    \label{eq:prmc_LP_red_constr2}
    \\
    & A_{s}[\U]^\top \diag{E_s} \alpha_{s} + x_{\post{s}} + \beta_{s}\mathbbm{1} = 0, \quad \alpha_{s} \geq 0, \quad && \forall s \in \States \setminus \terminalStates.
    \label{eq:prmc_LP_red_constr3}
\end{alignat}%
\end{subequations}%
\paragraph{Differentiation.}
However, when does \cref{eq:prmc_LP_reduced} have a (unique) optimal solution?
To provide some intuition, let us write the equation system in matrix form, i.e., $C \mleft[ \begin{array}{ccc} x & \alpha & \beta \end{array} \mright]^\top = d$, where we omit an explicit definition of matrix $C$ and vector $d$ for brevity.
It is apparent that if matrix $C$ is nonsingular, then \cref{eq:prmc_LP_reduced} has a unique solution.
This requires matrix $C$ to be square, which is achieved if, for each $s \in \States \setminus \terminalStates$, we have $|\post{s}| = \sum{E}_s + 1$.
In other words, the number of successor states of $s$ is equal to the number of active constraints of the polytope plus one.
This confirms our previous intuition from \cref{sec:differentiating_prMCs:intuition} on a polytope for $|\post{s}| = 3$ successor states, which required $\sum_{i = 1}^{m_s} E_i = 2$ active constraints. 

Let us formalize this intuition about computing derivatives for \glspl{prMC}. We can compute the derivative of the solution $x^\star$ by differentiating the equation system in \cref{eq:prmc_LP_reduced} through the product rule, in a very similar manner to the approach in \cref{sec:differentiating_pMCs}.
We state this key result in the following theorem.
\begin{theorem}
    \label{thm:derivative_prmc}
    Given a \gls{prMC} $\prmc$ and an instantiation $\U$, compute $x^\star, \alpha^\star, \beta^\star$ for \cref{eq:prmc_LP} and choose a parameter $\param_i \in \Param$.
    The partial derivatives $\parder{x}{u(\param_i)}$, $\parder{\alpha}{u(\param_i)}$, and $\parder{\beta}{u(\param_i)}$ are obtained as the solution to the linear equation system
    \begin{subequations}
        \label{eq:prmc_LP_deriv}
        \begin{alignat}{2}
        & \parder{x_s}{u(\param_i)} = 0, && \qquad\forall s \in \terminalStates
        \label{eq:prmc_LP_deriv_constr1}
        \\
        & 
        \parder{x_s}{u(\param_i)} + b_s[\U]^\top \diag{E_s} \parder{\alpha_s}{u(\param_i)} + \parder{\beta_s}{u(\param_i)} &&= -(\alpha_s^\star)^\top \diag{E_s} \parder{b_s[\U]}{u(\param_i)},
        \label{eq:prmc_LP_deriv_constr2}
        \\
        & 
        && 
        \qquad\forall s \in \States \setminus \terminalStates
        \nonumber
        \\    
        & A_s[\U]^\top \diag{E_s} \parder{\alpha_s}{u(\param_i)} + \parder{x_{\post{s}}}{u(\param_i)} + \parder{\beta_s}{u(\param_i)} \mathbbm{1} &&= -(\alpha^\star_s)^\top \diag{E_s} \parder{A_s[\U]}{u(\param_i)}, 
        \label{eq:prmc_LP_deriv_constr3}
        \\
        & 
        && 
        \qquad\forall s \in \States \setminus \terminalStates.
        \nonumber
        \end{alignat}%
    \end{subequations}%
\end{theorem}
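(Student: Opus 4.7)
My plan is to derive \cref{eq:prmc_LP_deriv} by implicitly differentiating the reduced active-constraint system \cref{eq:prmc_LP_reduced} with respect to the chosen parameter $u(\param_i)$, reusing exactly the product-rule argument that yielded \cref{eq:pMC:diff_eq_sys2} in the \gls{pMC} case. The starting point is that $x^\star,\alpha^\star,\beta^\star$ from \cref{eq:prmc_LP} also satisfy \cref{eq:prmc_LP_reduced}: the coordinates of $\alpha^\star$ associated with inactive polytope facets are zero, so multiplying by $\diag{E_s}$ is equivalent to multiplying by the identity on the support of $\alpha^\star$ and zeroing out the remaining coordinates without changing the value of either the objective or any constraint at the optimum.

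Next I would argue that \cref{eq:prmc_LP_reduced}, viewed as a system of equations parameterized by $\U$, is locally smooth in $\U$ around the given instantiation. Under the non-degeneracy condition highlighted in \cref{sec:differentiating_prMCs:intuition} (namely that for each $s\in\States\setminus\terminalStates$ the number of active halfspaces equals $|\post{s}|-1$ and the active facets are in general position), the binary vector $E_s$ is locally constant: strictly positive dual variables $\alpha^\star_{s,i}$ remain strictly positive under a small perturbation of $\U$, and strictly inactive primal constraints remain strictly inactive. Hence the support pattern encoded by $\diag{E_s}$ does not change in a neighborhood of $\U$, so in that neighborhood the system \cref{eq:prmc_LP_reduced} is a parameter-dependent \emph{linear} system in $(x,\alpha,\beta)$ with smooth (polynomial) coefficients in $\U$, and its solution is smooth in $\U$.

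With local smoothness established, I apply the product rule to each equation of \cref{eq:prmc_LP_reduced} term by term. Equation \cref{eq:prmc_LP_red_constr1} differentiates to \cref{eq:prmc_LP_deriv_constr1} trivially. For \cref{eq:prmc_LP_red_constr2}, differentiating the bilinear term $b_s[\U]^\top \diag{E_s}\alpha_s$ yields $\parder{b_s[\U]}{u(\param_i)}^\top\diag{E_s}\alpha_s^\star + b_s[\U]^\top\diag{E_s}\parder{\alpha_s}{u(\param_i)}$, and moving the term containing $\parder{b_s[\U]}{u(\param_i)}$ to the right-hand side (and using that $r_s$ is independent of $\U$) produces \cref{eq:prmc_LP_deriv_constr2}. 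The derivation of \cref{eq:prmc_LP_deriv_constr3} from \cref{eq:prmc_LP_red_constr3} is identical, differentiating $A_s[\U]^\top\diag{E_s}\alpha_s$ by the product rule and noting that $\mathbbm{1}$ and the selector mapping $x$ to $x_{\post{s}}$ are parameter-independent. The constraint $\alpha_s\ge 0$ is strictly satisfied on the active coordinates, so it imposes no additional condition on the derivatives.

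The main obstacle I anticipate is the well-posedness step: one must be careful that $E_s$ is indeed constant in a neighborhood of $\U$ and that the linear map sending $(\parder{x}{u(\param_i)},\parder{\alpha}{u(\param_i)},\parder{\beta}{u(\param_i)})$ to the left-hand side of \cref{eq:prmc_LP_deriv} is non-singular, so that the differentiated system uniquely determines the partial derivatives. As sketched in \cref{sec:differentiating_prMCs:intuition}, the square-matrix (and hence uniqueness) condition amounts to $|\post{s}|=\sum E_s+1$ for every $s\in\States\setminus\terminalStates$, exactly ruling out the degenerate situations of \cref{fig:uncertainty_set2} and \cref{fig:uncertainty_set3}; I would state this as the standing assumption under which the theorem is proved, and verify that under it the derivation above goes through without further subtlety.
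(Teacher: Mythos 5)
Your proposal is correct and follows essentially the same route as the paper: the paper's proof is exactly the term-by-term product-rule differentiation of \cref{eq:prmc_LP_reduced}, yielding \cref{eq:prmc_LP_deriv_constr1,eq:prmc_LP_deriv_constr2,eq:prmc_LP_deriv_constr3} after rearranging. Your additional care about the local constancy of the active-set vectors $E_s$ and the nonsingularity of the resulting linear map is a welcome refinement, but the paper defers those well-posedness issues to \cref{lemma:prmc:differentiability} rather than folding them into the proof of \cref{thm:derivative_prmc} itself.
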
%
The proof follows from applying the product rule to \cref{eq:prmc_LP_reduced} and is provided in \cref{app:proof:thm2}.
To compute the derivative for a parameter $\param_i \in \Param$, we thus solve a system of linear equations of size $|\States| + \sum_{s \in \States \setminus \terminalStates}{|\post{s}|}$.
Using \cref{thm:derivative_prmc}, we obtain sufficient conditions for the solution function to be differentiable.
\begin{lemma}
    \label{lemma:prmc:differentiability}
    Write the linear equation system in \cref{eq:prmc_LP_deriv} in matrix form, i.e.,
    \begin{equation}
        C \mleft[ \begin{array}{ccc} \parder{x}{u(\param_i)}, \parder{\alpha}{u(\param_i)}, \parder{\beta}{u(\param_i)} \end{array} \mright]^\top = d,
    \end{equation}
    for $C \in \Real^{q \times q}$ and $d \in \Real^q$, $q = |\States| + \sum_{s \in \States \setminus \terminalStates}{|\post{s}|}$, which are implicitly given by \cref{eq:prmc_LP_deriv}.
    The solution function $\ExpRR[\U]$ is differentiable at instantiation $\U$ if matrix $C$ is nonsingular, in which case we obtain $(\parder{\ExpRR}{u(\param_i)})[\U] = \sI^\top \parder{x}{u(\param_i)}$.
\end{lemma}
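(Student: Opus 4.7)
The plan is to combine Theorem~\ref{thm:derivative_prmc} with the implicit function theorem applied to the reduced system~\cref{eq:prmc_LP_reduced}. By the definition of the robust solution in \cref{eq:solution_prmc} and the LP reformulation~\cref{eq:prmc_LP}, we have $\ExpRR[\U] = \sI^\top x^\star$, where $x^\star$ is the $x$-component of the optimum of \cref{eq:prmc_LP} at the instantiation $\U$. If I can show that $x^\star$ is differentiable with respect to $u(\param_i)$ at $\U$, then by linearity of the inner product and the fact that $\sI$ does not depend on $\U$, the chain rule yields $(\partial \ExpRR / \partial u(\param_i))[\U] = \sI^\top (\partial x / \partial u(\param_i))$.

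The main work is therefore establishing differentiability of $x^\star$ at $\U$ under the assumed nonsingularity of $C$. First I would argue that \cref{eq:prmc_LP_reduced} characterizes the optimum of \cref{eq:prmc_LP} locally around $\U$: the active-set vector $E_s$ is determined by the strict positivity pattern of $\alpha^\star_s$, and by the assumed graph-preservation together with continuity of the LP data in $\U$, this pattern persists on a neighborhood of $\U$ (otherwise $C$ could not encode a locally valid reduction). On that neighborhood, the optimum $(x(\U), \alpha(\U), \beta(\U))$ is implicitly defined by the square system~\cref{eq:prmc_LP_reduced}, whose data depends smoothly (in fact polynomially) on $\U$ through $A_s[\U]$ and $b_s[\U]$.

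Next, the Jacobian of the left-hand side of \cref{eq:prmc_LP_reduced} with respect to the unknowns $(x, \alpha, \beta)$ is precisely the matrix $C$ that appears in the statement of the lemma; this is the same matrix one reads off when differentiating \cref{eq:prmc_LP_reduced} term-by-term via the product rule, which is exactly the calculation underlying \cref{thm:derivative_prmc}. If $C$ is nonsingular, the implicit function theorem guarantees that $(x, \alpha, \beta)$ is a differentiable function of $\U$ locally, and its partial derivatives are the unique solution of the linearised system~\cref{eq:prmc_LP_deriv}. Combining this with $\ExpRR[\U] = \sI^\top x(\U)$ gives the claimed formula.

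The main obstacle I anticipate is justifying that the active-set pattern encoded by the $E_s$ is locally stable, so that \cref{eq:prmc_LP_reduced} really does describe the optimizer on a neighborhood of $\U$ rather than only at $\U$ itself. Strict complementarity (i.e.\ $\alpha^\star_{s,i}>0$ exactly on the active constraints, which is built into the definition of $E_s$) together with nonsingularity of $C$ rules out the degenerate cases illustrated in \cref{fig:uncertainty_set2,fig:uncertainty_set3}: the former would force a zero column in $C$ because the missing active constraint leaves a direction along which the optimum can slide, and the latter would add a redundant row making $C$ non-square or rank-deficient. With those degenerate cases excluded by the nonsingularity hypothesis, the implicit function theorem argument goes through, and the statement follows.
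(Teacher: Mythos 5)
Your proof is correct and follows the same basic route as the paper, but it is considerably more complete: the paper's own proof is a two-line sketch that simply asserts ``the solution function is differentiable if there is a unique solution to the linearized system, which is guaranteed if $C$ is nonsingular,'' without explaining why unique solvability of \cref{eq:prmc_LP_deriv} entails differentiability of $\U \mapsto x^\star(\U)$ in the first place. You supply exactly the missing machinery: the observation that $C$ is the Jacobian of the reduced square system \cref{eq:prmc_LP_reduced} with respect to the unknowns $(x,\alpha,\beta)$, the implicit function theorem to conclude local differentiability and to identify the derivative with the solution of \cref{eq:prmc_LP_deriv}, and the recognition that one must also argue the active-set pattern $E_s$ is locally stable so that \cref{eq:prmc_LP_reduced} describes the optimizer on a neighborhood of $\U$ and not merely at $\U$. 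That last point is the genuine crux, and it is to your credit that you isolate it. One caveat: your claim that nonsingularity of $C$ by itself excludes the scenario of \cref{fig:uncertainty_set3} is slightly overstated. Since $E_s$ records only the constraints with $\alpha^\star_{s,i}>0$, a degenerate active constraint whose dual multiplier is zero is silently dropped from $C$; the matrix can then be square and nonsingular while the solution function is only one-sidedly differentiable. Ruling this out requires strict complementarity as an additional hypothesis, not as a consequence of nonsingularity. This is a weakness the lemma and the paper's proof share, so it does not count against you relative to the paper, but if you want your argument to be airtight you should state strict complementarity (every active constraint has a strictly positive multiplier) as an explicit assumption in the active-set stability step.
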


\begin{proof}
The partial derivative of the solution function is $\parder{\ExpRR}{u(\param_i)}[\U] = \sI^\top \parder{x^\star}{u(\param_i}$, where $\parder{x^\star}{u(\param_i}$ is (a part of) the solution to \cref{eq:prmc_LP_reduced}.
Thus, the solution function is differentiable if there is a (unique) solution to \cref{eq:prmc_LP_reduced}, which is guaranteed if matrix $C$ is nonsingular.
Thus, the claim in \cref{lemma:prmc:differentiability} follows. \qed
\end{proof}

\paragraph{Algorithm for \cref{prob:formal}.}
We use \cref{thm:derivative_prmc} to solve \cref{prob:formal} for \glspl{prMC}, similarly as for \glspl{pMC}.
Given a \gls{prMC} $\prmc$ and an instantiation $\U$, we first solve \cref{eq:prmc_LP} to obtain $x^\star, \alpha^\star, \beta^\star$.
Second, we use $\alpha^\star_s$ to compute the vector $E_s$ of active constraints for each $s \in \States \setminus \terminalStates$.
Third, for every parameter $\param \in \Param$, we solve the equation system in \cref{eq:prmc_LP_deriv}.
Thus, to compute the gradient of the solution function, we solve one LP and $|\Param|$ linear equation systems.

\subsection{Computing $k$-highest derivatives}
We directly apply the same procedure from \cref{subsec:pmc_faster} to compute the parameters with the $k \leq \ell$ highest derivatives.
As for \glspl{pMC}, we can compute the $k$ highest derivatives by solving a MILP encoding the equation system in \cref{eq:prmc_LP_deriv} for every parameter $\param \in \Param$, which we present in \cref{app:prmc_derivatives} for brevity.
This MILP has the same structure as \cref{eq:pMC_importance_MIP}, and thus we may apply the same linear relaxation to obtain an LP with the guarantees as stated in \cref{thm:LP_relaxation}.
In other words, solving the LP relaxation yields the set $\Param^\star$ of parameters with maximal derivatives as in \cref{prop:pMC}.
This set $\Param^\star$ is a solution to \cref{prob:formal2} for \glspl{prMC}.
\section{Numerical Experiments}
\label{sec:experiments}

We perform experiments to answer the following questions about our approach:
\begin{enumerate}
    \item Is it feasible (in terms of computational complexity and runtimes) to compute all derivatives, in particular compared to computing (robust) solutions?
    \item How does computing only the $k$ highest derivatives compare to computing  all derivatives?
    \item Can we apply our approach to effectively determine for which parameters to sample in a learning framework?
\end{enumerate}
Let us briefly summarize the computations involved in answering these questions.
First of all, computing the solution $\ExpR (\U)$ for a \gls{pMC}, which is defined in \cref{eq:solution_pmc}, means solving the linear equation system in \cref{eq:pMC_eqsys}.
Similarly, computing the robust solution $\ExpRR (\U)$ for a \gls{prMC} means solving the LP in \cref{eq:prmc_LP}.
Then, solving \cref{prob:formal}, i.e., computing all $|\Param|$ partial derivatives, amounts to solving a linear equation system for each parameter $\param \in \Param$ (namely, \cref{eq:pMC_eqsys} for a \gls{prMC} and \cref{eq:prmc_LP_deriv} for a \gls{prMC}).
In contrast, solving \cref{prob:formal2}, i.e., computing a subset $V^\star$ of parameters with maximal (or minimal) derivative, means for a \gls{pMC} that we solve the LP in \cref{eq:pMC_importance_LP} (or the equivalent LP for a \gls{prMC}) and thereafter extract the subset of $V^\star$ parameters using \cref{prop:pMC}.

\paragraph{Problem~3: Computing the $k$-highest derivatives.}
A solution to \cref{prob:formal2} is a set $\Param^\star$ of $k$ parameters but does not include the computation of the derivatives. 
However, it is straightforward to also obtain the actual derivatives $\left( \parder{\ExpR}{u(\param)} \right)[\U]$ for each parameter $\param \in \Param^\star$.
Specifically, we solve \cref{prob:formal} for the $k$ parameters in $\Param^\star$, such that we obtain the partial derivatives for all $\param \in \Param^\star$.
We remark that, for $k=1$, the derivative follows directly from the optimal value $\sI^\top y^+$ of the LP in \cref{eq:pMC_importance_LP}, so this additional step is not necessary.
We will refer to computing the actual values of the $k$ highest derivatives as \emph{Problem~3}.

\paragraph{Setup.}
We implement our approach in Python 3.10, using \storm~\cite{StormSTTT} to parse \glspl{pMC}, Gurobi~\cite{gurobi} to solve LPs, and the SciPy sparse solver to solve equation systems.
All experiments run on a computer with a 4GHz Intel Core i9 CPU and 64 GB RAM, with a timeout of one hour.
Our implementation is available at \url{https://doi.org/10.5281/zenodo.7864260}.

\paragraph{Grid world benchmarks.}
We use scaled versions of the grid world from the example in \cref{sec:motivation} with over a million states and up to $10\,000$ terrain types.
The vehicle only moves right or down, both with 50\% probability (wrapping around when leaving the grid).
Slipping only occurs when moving down and (slightly different from the example in \cref{sec:motivation}) means that the vehicle moves \emph{two cells instead of one}.
We obtain between $N=500$ and $1\,000$ samples of each slipping probability.
For the \glspl{pMC}, we use maximum likelihood estimation ($\frac{\bar{p}}{N}$, with $\bar{p}$ the sample mean) obtained from these samples as probabilities, whereas, for the \glspl{prMC}, we infer probability intervals using Hoeffding's inequality (see Q3 for details).

\paragraph{Benchmarks from literature.}
We also use several instances of parametric extensions of \glspl{MC} and \glspl{MDP} from standard benchmark suits~\cite{DBLP:conf/tacas/HartmannsKPQR19,DBLP:conf/cav/KwiatkowskaNP11}.
We also use \gls{pMC} benchmarks from~\cite{DBLP:journals/tac/CubuktepeJJKT22,DBLP:journals/sttt/BadingsCJJKT22} as these models have more parameters than the traditional benchmarks. 
We extend these benchmarks to \glspl{prMC} by constructing probability intervals around the \gls{pMC}'s probabilities.

\paragraph{Results.}
The results for all benchmarks are shown in \cref{tab:grid_world_results_full,tab:benchmarks} in \cref{app:experiments}.

\subsection*{Q1. Computing solutions vs. derivatives}
We investigate whether computing derivatives is feasible on p(r)MCs. 
In particular, we compare the computation times for computing derivatives on p(r)MCs (Problems 1 and 3) with the times for computing the solution for these models.

\newcommand{\engine}[1]{\textsf{#1}}
\newlength{\scatterplotsize}
\setlength{\scatterplotsize}{.48\textwidth}

\begin{figure}[t!]
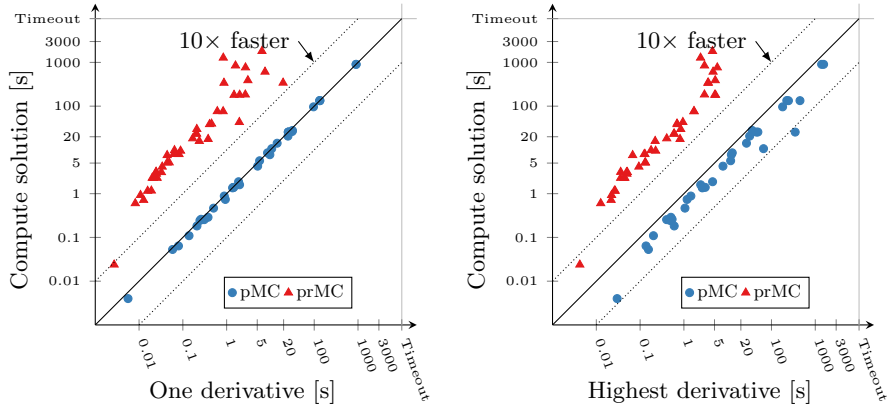

    \centering
    \iftikzcompile
        \scatterplotstormB{tikz/scatter_cameraready/sol_vs_deriv.csv}{OneDeriv}{One derivative [s]}{Verify}{Compute solution [s]}{Type}
        \scatterplotstormB{tikz/scatter_cameraready/sol_vs_deriv.csv}{Highest}{Highest derivative [s]}{Verify}{Compute solution [s]}{Type}
    \fi
    \caption{Runtimes (log-scale) for computing a single derivative (left, Problem~1) or the highest derivative (right, Problem~3), vs. computing the solution $\ExpR[\U]$/$\ExpRR[\U]$.
    }
    \label{fig:scatter_solutions}
\end{figure}

In \cref{fig:scatter_solutions}, we show for all benchmarks the times for computing the solution (defined in \cref{eq:solution_pmc,eq:solution_prmc}), versus computing either a single derivative for \cref{prob:formal} (left) or the highest derivative of all parameters resulting from Problem~3 (right).
A point $(x,y)$ in the left plot means that computing a single derivative took $x$ seconds while computing the solution took $y$ seconds. 
A line above the (center) diagonal means we obtained a speed-up over the time for computing the solution; a point over the upper diagonal indicates a $10\times$ speed-up or larger.

\paragraph{One derivative.}
The left plot in \cref{fig:scatter_solutions} shows that, for \glspl{pMC}, the times for computing the solution and a single derivative are approximately the same.
This is expected since both problems amount to solving a single equation system with $|\States|$ unknowns.
Recall that, for \glspl{prMC}, computing the solution means solving the LP in \cref{eq:prmc_LP}, while for derivatives we solve an equation system.
Thus, computing a derivative for a \gls{prMC} is relatively cheap compared to computing the solution, which is confirmed by the results in \cref{fig:scatter_solutions}.

\paragraph{Highest derivative.}
The right plot in \cref{fig:scatter_solutions} shows that, for \glspl{pMC}, computing the highest derivative is slightly slower than computing the solution (the LP to compute the highest derivative takes longer than the equation system to compute the solution).
On the other hand, computing the highest derivative for a \gls{prMC} is still cheap compared to computing the solution.
Thus, if we are using a \gls{prMC} anyways, computing the derivatives is relatively cheap.

\subsection*{Q2. Runtime improvement of computing only $k$ derivatives}
We want to understand the computational benefits of solving Problem~3 over solving \cref{prob:formal}. 
For Q2, we consider all models with $|\Param| \geq 10$ parameters.

{
\setlength{\tabcolsep}{3pt}
\begin{table*}[t!]

\centering
\caption{Model sizes, runtimes, and derivatives for selection of grid world models.}

\begin{threeparttable}

\scalebox{0.75}{
\setlength{\tabcolsep}{3pt}
\begin{tabular}{lrrrrrrrrrr}
\toprule
\multicolumn{4}{c}{{Model statistics}} 
& \multicolumn{2}{c}{{Verifying}} 
& \multicolumn{1}{c}{{Problem 1}} 
& \multicolumn{2}{c}{{Problem 3}} 
& \multicolumn{2}{c}{{Derivatives}} \\
  \cmidrule(lr){1-4} \cmidrule(lr){5-6} \cmidrule(lr){7-7} \cmidrule(lr){8-9} \cmidrule(lr){10-11}
Type & $|\States|$ & $|\Param|$ & \#trans & $\ensuremath{\mathsf{sol}_{(R)}}[\U]$ & Time [s] & All derivs. [s] & $k=1$ [s] & $k=10 [s]$ & Highest & Error \% \\
\midrule
pMC &    5000 &         50 &       14995 &     5.07 &             1.39 &                        3.32 &               2.64 &                2.69 &         1.54e+00 &          0.0 \\
pMC &    5000 &        100 &       14995 &     5.05 &             1.36 &                        4.17 &               2.63 &                2.66 &         1.28e+00 &          0.0 \\
pMC &    5000 &        921 &       14995 &     4.93 &             1.87 &                       19.92 &               4.52 &                2.87 &         1.20e+00 &          0.0 \\
pMC &   80000 &        100 &      239995 &     8.01 &            25.54 &                       98.47 &              45.18 &               46.87 &         1.95e+00 &          0.0 \\
pMC &   80000 &       1000 &      239995 &     8.01 &            25.64 &                      612.97 &              48.92 &               58.20 &         2.08e+00 &          0.0 \\
pMC &   80000 &       9831 &      239995 &     7.93 &            25.52 &                    5,650.25 &             347.76 &            1,343.59 &         2.10e+00 &          0.0 \\
pMC & 1280000 &        100 &     3839995 &    12.90 &           902.52 &                    4,747.43 &           1,396.51 &            1,507.77 &         3.32e+00 &          0.0 \\
pMC & 1280000 &       1000 &     3839995 &    12.79 &           902.67 &                   37,078.12 &           1,550.45 &            1,617.27 &         3.18e+00 &          0.0 \\
pMC & 1280000 &      10000 &     3839995 &   Timeout\tnote{b} &           --- &                   --- &           --- &            --- &         --- &          --- \\
\midrule
prMC &    5000 &        100 &       14995 &   136.07 &            23.46 &                        3.55 &               0.60 &                1.58 &        -1.26e-02 &         -0.0 \\
prMC &    5000 &        921 &       14995 &   138.74 &            29.82 &                       25.23 &               0.85 &                1.09 &        -4.44e-03 &         -0.0 \\
prMC &   20000 &        100 &       59995 & 2,789.77 &         1,276.43 &                       15.68 &               2.40 &                2.70 &        -4.96e-01 &         -0.1 \\
prMC &   20000 &       1000 &       59995 & 2,258.41 &           339.96 &                      159.70 &               3.53 &                4.09 &        -9.51e-02 &         -0.0 \\
prMC &   80000 &       100 &       239995 & Timeout\tnote{b} &           --- &                      --- &               --- &                --- &        --- &         --- \\
\bottomrule
\end{tabular}

}
\begin{tablenotes}
        \raggedright
        \item[a] \tableextrapolationtext
        \item[b] \tabletimeouttext
\end{tablenotes}
\end{threeparttable}

\label{tab:grid_world_results}
\end{table*}
}

An excerpt of results for the grid world benchmarks is presented in \cref{tab:grid_world_results}.
Recall that, after obtaining the (robust) solution, solving \cref{prob:formal} amounts to solving $|\Param|$ linear equation systems, whereas Problem 3 involves solving a single LP and $k$ equations systems.
From \cref{tab:grid_world_results}, it is clear that computing $k$ derivatives is orders of magnitudes faster than computing all $|\Param|$ derivatives, especially if the total number of parameters is high.

\begin{figure}[t!]
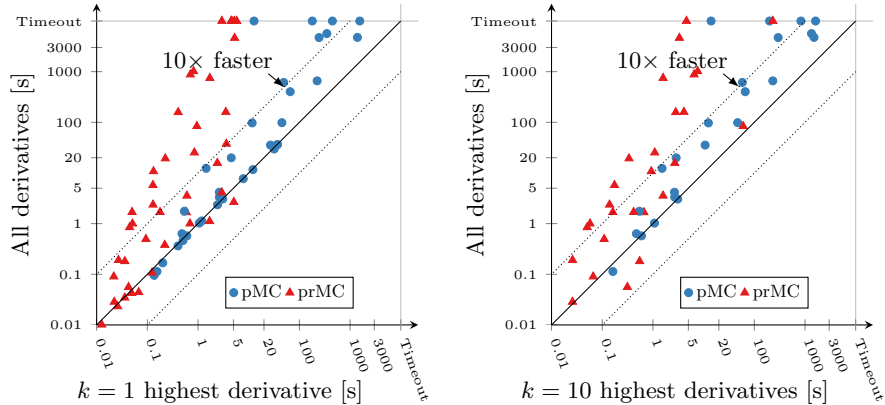

    \centering
    \iftikzcompile
        \scatterplotstorm{tikz/scatter_cameraready/allClean_k=1.csv}{Prob1}{$k=1$ highest derivative [s]}{Prob2}{All derivatives [s]}{Type}
        \scatterplotstorm{tikz/scatter_cameraready/allClean_k=10.csv}{Prob1}{$k=10$ highest derivatives [s]}{Prob2}{All derivatives [s]}{Type}
    \fi
    \caption{Runtimes (log-scale) for computing the highest (left) or $10$ highest (right) derivatives (Problem~3), versus computing all derivatives (\cref{prob:formal}).
    }
    \label{fig:run_time}
\end{figure}

We compare the runtimes for computing all derivatives (\cref{prob:formal}) with computing only the $k=1$ or $10$ highest derivatives (Problem~3).
The left plot of \cref{fig:run_time} shows the runtimes for $k=1$, and the right plot for the $k=10$ highest derivatives.
The interpretation for \cref{fig:run_time} is the same as for \cref{fig:scatter_solutions}.
From \cref{fig:run_time}, we observe that computing only the $k$ highest derivatives generally leads to significant speed-ups, often of more than $10$ times (except for very small models).
Moreover, the difference between $k=1$ and $k=10$ is minor, showing that retrieving the actual derivatives after solving \cref{prob:formal2} is relatively cheap.

\paragraph{Numerical stability.}
While our algorithm is exact, our implementation uses floating-point arithmetic for efficiency.
To evaluate the numerical stability, we compare the highest derivatives (solving Problem~3 for $k=1$) with an empirical approximation of the derivative obtained by perturbing the parameter by $\num{1e-3}$.
The difference (column \emph{`Error. \%'} in \cref{tab:grid_world_results,tab:grid_world_results_full}) between both is marginal, indicating that our implementation is sufficiently numerically stable to return accurate derivatives.

\subsection*{Q3. Application in a learning framework}
Reducing the sample complexity is a key challenge in learning under uncertainty~\cite{kakade2003sample,Moerland2020modelbasedRL}.
In particular, learning in stochastic environments is very data-intensive, and realistic applications tend to require millions of samples to provide tight bounds on measures of interest~\cite{DBLP:conf/nips/BuckmanHTBL18}.
Motivated by this challenge, we apply our approach in a learning framework to investigate if derivatives can be used to effectively guide exploration, compared to alternative exploration strategies.

\paragraph{Models.}
We consider the problem of where to sample in 1) a slippery grid world with $|\States| = 800$ and $|\Param| = 100$ terrain types, and 2) the drone benchmark from \cite{DBLP:journals/tac/CubuktepeJJKT22} with $|\States| = 4\,179$ and $|\Param| = 1\,053$ parameters.
As in the motivating example in \cref{sec:motivation}, we learn a model of the unknown \gls{MC} in the form of a \gls{prMC}, where the parameters are the sample sizes for each parameter.
We assume access to a model that can arbitrarily sample each parameter (i.e., the slipping probability in the case of the grid world).
We use an initial sample size of $N_i=100$ for each parameter $i \in \{1,\ldots,|\Param|\}$, from which we infer a $\beta = 0.9$ (90\%) confidence interval using Hoeffding's inequality.
The interval for parameter $i$ is $[\hat{p}_i - \epsilon_i, \hat{p}_i + \epsilon_i]$, with $\hat{p}_i$ the sample mean and $\epsilon_i = \sqrt{\frac{\log{2} - \log{(1-\beta)}}{2N}}$ (see, e.g.,~\cite{DBLP:books/daglib/Boucheron2013} for details).

\paragraph{Learning scheme.}
We iteratively choose for which parameter $\param_i \in \Param$ to obtain $25$ (for the grid world) or $250$ (for the drone) additional samples.
We compare four strategies for choosing the parameter $\param_i$ to sample for: 1) with highest derivative, i.e., solving Problem~3 for $k=1$; 2) with biggest interval width $\epsilon_i$; 3) uniformly; and 4) sampling according to the expected number of visits times the interval width (see \cref{app:experiments:learning} for details).
After each step, we update the robust upper bound on the solution for the \gls{prMC} with the additional samples.

\paragraph{Results.}
The upper bounds on the solution for each sampling strategy, as well as the solution for the \gls{MC} with the true parameter values, are shown in \cref{fig:learning}.
For both benchmarks, our derivative-guided sampling strategy converges to the true solution faster than the other strategies.
Notably, our derivative-guided strategy accounts for both the uncertainty and importance of each parameter, which leads to a lower sample complexity required to approach the true solution.

\begin{figure}[t!]
    \subfloat[Slippery grid world.]{%
    \footnotesize\begin{tikzpicture}
  \begin{axis}[
      width=.48\linewidth,
      height=5cm,
      ymajorgrids,
      grid style={dashed,gray!40},
      xlabel=Steps (of $25$ samples each),
      ylabel=Robust solution,
      xmin=0,
      xmax=1000,
      ymin=0,
      ymax=200,
      xtick={0,250,...,1000},
      xtick pos=left,
      ytick={0,50,100,...,250},
      every axis plot/.append style={line width=1pt},
      legend cell align={left},
      legend columns=1,
      legend image post style={scale=0.6},
      legend style={nodes={scale=0.9, transform shape},
                    anchor=north east, 
                    column sep=0ex,}
    ]

    \addplot[color=color1] table[x=x, y=derivative_mean, col sep=semicolon] {tikz/learning/cameraready/learning_gridworld.csv};
    
    \addplot [name path=upper_derivative, draw=none] table[x=x, y=derivative_max, col sep=semicolon, forget plot] {tikz/learning/cameraready/learning_gridworld.csv};
    \addplot [name path=lower_derivative, draw=none] table[x=x, y=derivative_min, col sep=semicolon, forget plot] {tikz/learning/cameraready/learning_gridworld.csv};
    \addplot [fill=color1!40, forget plot] fill between[of=upper_derivative and lower_derivative];

    \addplot[color=color2, dashed] table[x=x, y=samples_mean, col sep=semicolon] {tikz/learning/cameraready/learning_gridworld.csv};
    
    \addplot [name path=lower_samples, draw=none] table[x=x, y=samples_min, col sep=semicolon, forget plot] {tikz/learning/cameraready/learning_gridworld.csv};
    \addplot [name path=upper_samples, draw=none] table[x=x, y=samples_max, col sep=semicolon, forget plot] {tikz/learning/cameraready/learning_gridworld.csv};
    \addplot [fill=color2!20, forget plot, fill opacity=1.0] fill between[of=lower_samples and upper_samples];
    
    \addplot[color=color3, dotted] table[x=x, y=random_mean, col sep=semicolon] {tikz/learning/cameraready/learning_gridworld.csv};
    
    \addplot [name path=lower_random, draw=none] table[x=x, y=random_min, col sep=semicolon, forget plot] {tikz/learning/cameraready/learning_gridworld.csv};
    \addplot [name path=upper_random, draw=none] table[x=x, y=random_max, col sep=semicolon, forget plot] {tikz/learning/cameraready/learning_gridworld.csv};
    \addplot [fill=color3!40, forget plot] fill between[of=lower_random and upper_random];

    \addplot[color=color4, dashdotted] table[x=x, y=expVisits_sampling_mean, col sep=semicolon] {tikz/learning/cameraready/learning_gridworld.csv};
    
    \addplot [name path=lower_expVisits, draw=none] table[x=x, y=expVisits_sampling_min, col sep=semicolon, forget plot] {tikz/learning/cameraready/learning_gridworld.csv};
    \addplot [name path=upper_expVisits, draw=none] table[x=x, y=expVisits_sampling_max, col sep=semicolon, forget plot] {tikz/learning/cameraready/learning_gridworld.csv};
    \addplot [fill=color4!40, forget plot] fill between[of=lower_expVisits and upper_expVisits];

    \draw [densely dashed, very thick, gray] (axis cs:0,10) -- (axis cs:1000,10) node[font={\footnotesize}, pos=0.23, above, black] {True solution};
    
    \legend{{Derivative},{Interval}, {Uniform}, {ExpVisits*Width}}
  \end{axis}
\end{tikzpicture}
    \label{fig:learning_a}}
    \subfloat[Drone motion planning.]{%
    \footnotesize\begin{tikzpicture}
  \begin{axis}[
      width=.48\linewidth,
      height=5cm,
      ymajorgrids,
      grid style={dashed,gray!40},
      xlabel=Steps (of $250$ samples each),
      ylabel=Robust solution,
      xmin=0,
      xmax=10000,
      ymin=0.08,
      ymax=0.45,
      xtick={0,2500,...,10000},
      xtick pos=left,
      xticklabel style={
        /pgf/number format/fixed,
        /pgf/number format/precision=1
      },
      scaled x ticks=false,
      ytick={0,0.1,...,0.5},
      every axis plot/.append style={line width=1pt},
      legend cell align={left},
      legend columns=1,
      legend image post style={scale=0.6},
      legend style={nodes={scale=0.9, transform shape},
                    anchor=north east, 
                    column sep=0ex,}
    ]

    \addplot[color=color1] table[x=x, y=derivative_mean, col sep=semicolon] {tikz/learning/cameraready/learning_drone.csv};
    
    \addplot [name path=upper_derivative, draw=none] table[x=x, y=derivative_max, col sep=semicolon, forget plot] {tikz/learning/cameraready/learning_drone.csv};
    \addplot [name path=lower_derivative, draw=none] table[x=x, y=derivative_min, col sep=semicolon, forget plot] {tikz/learning/cameraready/learning_drone.csv};
    \addplot [fill=color1!40, forget plot] fill between[of=upper_derivative and lower_derivative];

    \addplot[color=color2, dashed] table[x=x, y=samples_mean, col sep=semicolon] {tikz/learning/cameraready/learning_drone.csv};
    
    \addplot [name path=lower_samples, draw=none] table[x=x, y=samples_min, col sep=semicolon, forget plot] {tikz/learning/cameraready/learning_drone.csv};
    \addplot [name path=upper_samples, draw=none] table[x=x, y=samples_max, col sep=semicolon, forget plot] {tikz/learning/cameraready/learning_drone.csv};
    \addplot [fill=color2!20, forget plot, fill opacity=1.0] fill between[of=lower_samples and upper_samples];
    
    \addplot[color=color3, dotted] table[x=x, y=random_mean, col sep=semicolon] {tikz/learning/cameraready/learning_drone.csv};
    
    \addplot [name path=lower_random, draw=none] table[x=x, y=random_min, col sep=semicolon, forget plot] {tikz/learning/cameraready/learning_drone.csv};
    \addplot [name path=upper_random, draw=none] table[x=x, y=random_max, col sep=semicolon, forget plot] {tikz/learning/cameraready/learning_drone.csv};
    \addplot [fill=color3!40, forget plot] fill between[of=lower_random and upper_random];

    \addplot[color=color4, dashdotted] table[x=x, y=expVisits_sampling_mean, col sep=semicolon] {tikz/learning/cameraready/learning_drone.csv};
    
    \addplot [name path=lower_expVisits, draw=none] table[x=x, y=expVisits_sampling_min, col sep=semicolon, forget plot] {tikz/learning/cameraready/learning_drone.csv};
    \addplot [name path=upper_expVisits, draw=none] table[x=x, y=expVisits_sampling_max, col sep=semicolon, forget plot] {tikz/learning/cameraready/learning_drone.csv};
    \addplot [fill=color4!40, forget plot] fill between[of=lower_expVisits and upper_expVisits];

    \draw [densely dashed, very thick, gray] (axis cs:0,0.113) -- (axis cs:10000,0.113) node[font={\footnotesize}, pos=0.23, above, black] {True solution};
    
    \legend{{Derivative},{Interval}, {Uniform}, {ExpVisits*Width}}
  \end{axis}
\end{tikzpicture}
    \label{fig:learning_b}}
    \caption{Robust solutions for each sampling strategy in the learning framework for the grid world (a) and drone (b) benchmarks.
    Averages values of 10 (grid world) or 5 (drone) repetitions are shown, with shaded areas the min/max.}   
    \label{fig:learning}
\end{figure}
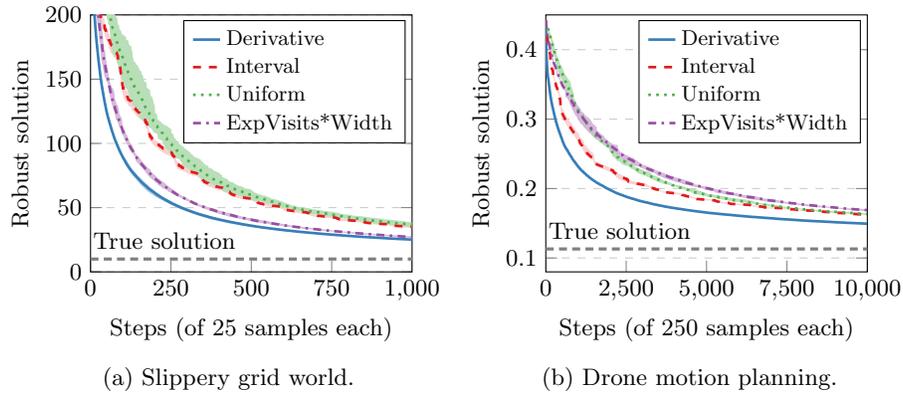
\section{Related Work}
\label{sec:related_work}
We discuss related work in three areas: pMCs, their extension to parametric interval Markov chains (piMCs), and general sensitivity analysis methods. 

\paragraph{Parametric Markov chains.}
\glspl{pMC}~\cite{DBLP:conf/ictac/Daws04,DBLP:journals/fac/LanotteMT07}
have traditionally been studied in terms of computing the solution function~\cite{DBLP:journals/sttt/HahnHZ11,DBLP:conf/cav/DehnertJJCVBKA15,DBLP:journals/iandc/BortolussiMS16,DBLP:journals/tse/FilieriTG16,DBLP:conf/icse/FangCGA21}.
Much recent literature considers synthesis (find a parameter valuation such that a specification is satisfied) or verification (prove that all valuations satisfy a specification).
We refer to~\cite{DBLP:conf/birthday/0001JK22} for a recent overview. 
For our paper, particularly relevant are~\cite{DBLP:conf/tacas/SpelJK21}, which checks whether a derivative is positive (for all parameter valuations), and~\cite{DBLP:conf/vmcai/HeckSJMK22}, which solves parameter synthesis via gradient descent. 
We note that all these problems are (co-)ETR complete~\cite{DBLP:journals/jcss/JungesK0W21} and that the solution function is exponentially large in the number of parameters~\cite{DBLP:journals/iandc/BaierHHJKK20}, whereas we consider a polynomial-time algorithm. 
Furthermore, practical \emph{verification} procedures for uncontrollable parameters (as we do) are limited to less than 10 parameters.
Parametric verification is used in~ \cite{DBLP:conf/qest/PolgreenWHA17} to guide model refinement by detecting for which parameter values a specification is satisfied.
In contrast, we consider slightly more conservative \glspl{rMC} and aim to stepwise optimize an objective. 
Solution functions also provide an approach to compute and refine confidence intervals~\cite{DBLP:journals/tr/CalinescuGJPRT16}; however, the size of the solution function hampers scalability.

\paragraph{Parametric interval Markov chains (piMCs).}
While \glspl{prMC} have, to the best of our knowledge, not been studied, their slightly more restricted version are piMCs.
In particular, piMCs have interval-valued transitions with parametric bounds.
Work on piMCs falls into two categories. First, \emph{consistency}~\cite{DBLP:conf/vmcai/DelahayeLP16,DBLP:conf/forte/PetrucciP18}: is there a parameter instantiation such that the (reachable fragment of the) induced interval MC contains valid probability distributions?
Second, parameter synthesis for quantitative and qualitative reachability in piMCs with up to 12 parameters~\cite{DBLP:journals/tcs/BartDFLMT18}.

\paragraph{Perturbation analysis.}
Perturbation analysis considers the change in solution by any perturbation vector $X$ for the parameter instantiation, whose norm is upper bounded by $\delta$, i.e., $||X|| \leq \delta$ (or conversely, which $\delta$ ensures the solution perturbation is below a given maximum). 
Likewise,~\cite{DBLP:conf/rp/Chonev19} uses the distance between two instantiations of a \gls{pMC} (called augmented interval MC) to bound the change in reachability probability.
Similar analyses exist for stationary distributions~\cite{abbas2016critical}. 
These problems are closely related to the verification problem in \glspl{pMC} and are equally (in)tractable if there are dependencies over multiple parameters. 
To improve tractability, a follow-up~\cite{DBLP:journals/tse/SuFCR16} derives asymptotic bounds based on first or second-order Taylor expansions. 
Other approaches to perturbation analysis analyze individual paths of a system~
\cite{DBLP:journals/orl/FuH94,DBLP:journals/tac/CaoC97,DBLP:journals/tcst/CaoW98}.
Sensitivity analysis in (parameter-free) imprecise \glspl{MC}, a variation to \glspl{rMC}, is thoroughly studied in~\cite{DBLP:conf/uai/CoomanHQ08}.

\paragraph{Exploration in learning.}
Similar to Q3 in \cref{sec:experiments}, determining where to sample is relevant in many learning settings.
Approaches such as probably approximately correct (PAC) statistical model checking~\cite{DBLP:conf/cav/AshokKW19,DBLP:conf/cav/AgarwalGKM22} and model-based reinforcement learning~\cite{Moerland2020modelbasedRL} commonly use optimistic exploration policies~\cite{DBLP:journals/ftml/Munos14}.
By contrast, we guide exploration based on the sensitivity analysis of the solution function with respect to the parametric model.
\section{Concluding Remarks}
\label{sec:conclusion}

We have presented efficient methods to compute partial derivatives of the solution functions for \glspl{pMC} and \glspl{prMC}.
For both models, we have shown how to compute these derivatives explicitly \emph{for all parameters}, as well as how to compute only the \emph{$k$ highest derivatives}.
Our experiments have shown that we can compute derivatives for models with over a million states and thousands of parameters.
In particular, computing the $k$ highest derivatives yields significant speed-ups compared to computing all derivatives explicitly and is feasible for prMCs which can be verified.
In the future, we want to support nondeterminism in the models and apply our methods in (online) learning frameworks, in particular for settings where reducing the uncertainty is computationally expensive~\cite{DBLP:conf/aips/NearyVCT22,DBLP:conf/cav/JungesS22}.

\bibliographystyle{splncs04}
\bibliography{references.bib}

\appendix

\section{Proofs and Mathematical Details}

\subsection{Proof of \cref{thm:derivative_prmc}}
\label{app:proof:thm2}

The proof follows from applying the product rule to the equation system in \cref{eq:prmc_LP_reduced}.
The derivative of the right-hand side \cref{eq:prmc_LP_red_constr1} (i.e., for all terminal states $s \in \terminalStates$) is trivially zero.
The derivative of \cref{eq:prmc_LP_red_constr2} is
\begin{equation}
\begin{split}
    \parder{x_s}{u(\param_i)} &= - \parder{\left(b_s[\U]^\top \diag{E_s} \alpha_s\right)}{u(\param_i)} - \parder{\beta_s}{u(\param_i)}
    \\
    \parder{x_s}{u(\param_i)} &= - (\alpha_s^\star)^\top \diag{E_s} \parder{b_s[\U]}{u(\param_i)} - b_s[\U]^\top \diag{E_s} \parder{\alpha}{u(\param_i)} - \parder{\beta_s}{u(\param_i)},
\end{split}
\end{equation}
which, after rearranging, yields \cref{eq:prmc_LP_deriv_constr2}.
Similarly, the derivative of \cref{eq:prmc_LP_red_constr3} is
\begin{equation}
    (\alpha_s^\star)^\top \diag{E_s} \parder{A_s[\U]}{u(\param_i)} + A_s[\U]^\top \diag{E_s} \parder{\alpha_s}{u(\param_i)} + \parder{x_{\post{s}}}{u(\param_i)} + \parder{\beta_s}{u(\param_i)} \mathbbm{1},
\end{equation}
which after rearranging yields \cref{eq:prmc_LP_deriv_constr3}, so we conclude the proof.

\subsection{Computing $k$-highest derivatives for prMCs}
\label{app:prmc_derivatives}

Analogous to \cref{eq:pMC_importance_MIP}, we can compute the $k \leq \ell = |\Param|$ highest derivatives of a \gls{prMC} based on the solution to a MILP.
For brevity, let us define the notations $x'_s = \parder{x_s}{u(\param_i)} \in \Real$, $\alpha'_s = \parder{\alpha_s}{u(\param_i)} \in \Real^{m_s}$ and $\beta'_s = \parder{\beta_s}{u(\param_i)} \in \Real$.
Using this notation, we obtain the following MILP:
\begin{subequations}
    \label{eq:prMC_importance_MIP}%
    \begin{alignat}{2}
    \maximize_{x', \alpha', \beta', z \in \{ 0, 1 \}^{\ell}} \,\, & \sI^\top x' &&
    \label{eq:prMC_importance_MIP:obj}
    \\
    \text{subject to} \,\, & x'_s = 0,
    && \qquad\forall s \in \terminalStates
    \label{eq:prMC_importance_MIP:cns1}
    \\
    & 
    x'_s + b_s[\U]^\top \diag{E_s} \alpha'_s + \beta'_s 
    &&= 
    -(\alpha_s^\star)^\top \diag{E_s} \sum_{i=1}^\ell z_i \parder{b_s[\U]}{u(\param_i)}, 
    \nonumber
    \\
    & 
    && 
    \qquad\forall s \in \States \setminus \terminalStates
    \label{eq:prMC_importance_MIP:cns2}
    \\    
    & 
    A_s[\U]^\top \diag{E_s} \alpha'_s + x'_{\post{s}} + \beta'_s \mathbbm{1}
    &&=
    -(\alpha^\star_s)^\top \diag{E_s} \sum_{i=1}^\ell z_i \parder{A_s[\U]}{u(\param_i)},
    \nonumber
    \\
    & 
    && 
    \qquad\forall s \in \States \setminus \terminalStates.
    \label{eq:prMC_importance_MIP:cns3}
    \\
    & z_1 + \cdots + z_{\ell} = k. &&
    \label{eq:prMC_importance_MIP:cns4}
    \end{alignat}%
\end{subequations}%
Observe that the difference between the constraints in \cref{eq:prMC_importance_MIP:cns2,eq:prMC_importance_MIP:cns3} and the equation system in \cref{eq:prmc_LP_deriv} lies in the summation over $i=1,\ldots,\ell$.
We derive the same LP relaxation as in \cref{eq:pMC_importance_LP}, i.e., we relax the binary variables $z \in \{0,1\}^\ell$ to continuous variables $z \in [0,1]^\ell$.
Since \cref{eq:prMC_importance_MIP} has the exact same characteristics as \cref{eq:pMC_importance_MIP}, \cref{thm:LP_relaxation} applies equivalently to the case for \glspl{prMC}.
In other words, the LP relaxation is exact, and we can use the resulting solution to find the set $V^\star$ of parameters with maximal derivatives using \cref{prop:pMC}.

{
\setlength{\tabcolsep}{3pt}
\begin{table*}[b!]

\centering
\caption{Model sizes, runtimes, and derivatives for all grid world benchmarks.}

\begin{threeparttable}

\scalebox{0.75}{
\setlength{\tabcolsep}{3pt}
\begin{tabular}{lrrrrrrrrrr}
\toprule
\multicolumn{4}{c}{{Model statistics}} 
& \multicolumn{2}{c}{{Verifying}} 
& \multicolumn{1}{c}{{Problem 1}} 
& \multicolumn{2}{c}{{Problem 3}} 
& \multicolumn{2}{c}{{Derivatives}} \\
  \cmidrule(lr){1-4} \cmidrule(lr){5-6} \cmidrule(lr){7-7} \cmidrule(lr){8-9} \cmidrule(lr){10-11}
Type & $|\States|$ & $|\Param|$ & \#trans & $\ensuremath{\mathsf{sol}_{(R)}}[\U]$ & Time [s] & All derivs. [s] & $k=1$ [s] & $k=10 [s]$ & Highest & Error \% \\
\midrule
pMC &     200 &         10 &         595 &     1.87 &             0.05 &                        0.11 &               0.15 &                0.16 &         5.53e-01 &          0.0 \\
pMC &     800 &        100 &        2395 &     7.91 &             0.24 &                        0.63 &               0.48 &                0.47 &         1.92e+00 &          0.0 \\
pMC &    5000 &         50 &       14995 &     5.07 &             1.39 &                        3.32 &               2.64 &                2.69 &         1.54e+00 &          0.0 \\
pMC &    5000 &        100 &       14995 &     5.05 &             1.36 &                        4.17 &               2.63 &                2.66 &         1.28e+00 &          0.0 \\
pMC &    5000 &        921 &       14995 &     4.93 &             1.87 &                       19.92 &               4.52 &                2.87 &         1.20e+00 &          0.0 \\
pMC &   20000 &        100 &       59995 &     1.99 &            14.22 &                       35.51 &              26.88 &               10.72 &         4.92e-01 &          0.0 \\
pMC &   20000 &       1000 &       59995 &     2.04 &             5.65 &                       97.23 &              11.68 &               12.41 &         5.18e-01 &          0.0 \\
pMC &   80000 &        100 &      239995 &     8.01 &            25.54 &                       98.47 &              45.18 &               46.87 &         1.95e+00 &          0.0 \\
pMC &   80000 &       1000 &      239995 &     8.01 &            25.64 &                      612.97 &              48.92 &               58.20 &         2.08e+00 &          0.0 \\
pMC &   80000 &       9831 &      239995 &     7.93 &            25.52 &                    5,650.25\tnote{a} &             347.76 &            1,343.59 &         2.10e+00 &          0.0 \\
pMC &  320000 &        100 &      959995 &     3.22 &           134.39 &                      659.06 &             223.71 &              231.73 &         8.69e-01 &          0.0 \\
pMC &  320000 &       1000 &      959995 &     3.20 &           133.25 &                    4,712.32\tnote{a} &             244.07 &              295.71 &         8.02e-01 &          0.0 \\
pMC &  320000 &      10000 &      959995 &     3.28 &           133.66 &                   45,655.90\tnote{a} &             447.62 &              831.01 &         9.08e-01 &          0.0 \\
pMC & 1280000 &        100 &     3839995 &    12.90 &           902.52 &                    4,747.43\tnote{a} &           1,396.51 &            1,507.77 &         3.32e+00 &          0.0 \\
pMC & 1280000 &       1000 &     3839995 &    12.79 &           902.67 &                   37,078.12\tnote{a} &           1,550.45 &            1,617.27 &         3.18e+00 &          0.0 \\
pMC & 1280000 &      10000 &     3839995 &   Timeout\tnote{b} --- &           --- &                   --- &           --- &            --- &         --- &          --- \\
\midrule
prMC &     200 &         10 &         595 &     3.36 &             0.93 &                        0.03 &               0.02 &                0.03 &        -3.02e-04 &         -0.0 \\
prMC &     800 &        100 &        2395 &    26.44 &             4.13 &                        0.49 &               0.09 &                0.11 &        -1.57e-03 &         -0.0 \\
prMC &    5000 &         50 &       14995 &   141.55 &            23.25 &                        1.67 &               0.59 &                0.66 &        -1.88e-02 &         -0.0 \\
prMC &    5000 &        100 &       14995 &   136.07 &            23.46 &                        3.55 &               0.60 &                1.58 &        -1.26e-02 &         -0.0 \\
prMC &    5000 &        921 &       14995 &   138.74 &            29.82 &                       25.23 &               0.85 &                1.09 &        -4.44e-03 &         -0.0 \\
prMC &   20000 &        100 &       59995 & 2,789.77 &         1,276.43 &                       15.68 &               2.40 &                2.70 &        -4.96e-01 &         -0.1 \\
prMC &   20000 &       1000 &       59995 & 2,258.41 &           339.96 &                      159.70 &               3.53 &                4.09 &        -9.51e-02 &         -0.0 \\
prMC &   80000 &       100 &       239995 & Timeout\tnote{b} &           --- &                      --- &               --- &                --- &        --- &         --- \\
prMC &   80000 &      1000 &       239995 & Timeout\tnote{b} &           --- &                      --- &               --- &                --- &        --- &         --- \\
prMC &   80000 &     10000 &       239995 & Timeout\tnote{b} &           --- &                      --- &               --- &                --- &        --- &         --- \\
\bottomrule
\end{tabular}

}
\begin{tablenotes}
        \raggedright
        \item[a] \tableextrapolationtext
        \item[b] \tabletimeouttext
\end{tablenotes}
\end{threeparttable}

\label{tab:grid_world_results_full}
\end{table*}
}

\section{Detailed Benchmark Results}
\label{app:experiments}

We provide a more detailed overview of the setup and the results for the numerical experiments performed in \cref{sec:experiments}.
The complete overview of the statistics of the grid world benchmarks is shown in \cref{tab:grid_world_results_full}.
Moreover, the complete table for all other benchmarks is presented in \cref{tab:benchmarks}.

\subsection{Details about the learning framework}
\label{app:experiments:learning}
As described in \cref{sec:experiments}, we use our methods to determine where to sample in two benchmarks.
First, we consider a slippery grid world with $|\States| = 800$ states and $|\Param| = 100$ terrain types (to bias the importance towards specific terrain types, we distribute $10$ terrain types over $50\%$ of the states and the remaining terrains over the other $50\%$).
Second, we consider the drone motion planning problem from \cite{DBLP:journals/tac/CubuktepeJJKT22} with $|\States| = 4\,179$ states and $|\Param| = 1\,053$ parameters.
For both benchmarks, we compare four sampling strategies:
\begin{enumerate}
    \item greedy sampling for the highest derivative, i.e., solving Problem~3 for $k=1$;
    \item greedy sampling for the biggest interval width $\epsilon_i$ (as we use the same confidence level $\beta$ on each interval, note that this strategy coincides with choosing the parameter with the lowest sample size);
    \item uniform sampling over all parameters;
    \item weighted sampling according to an importance factor for each parameter. Specifically, inspired by \cite{DBLP:conf/cav/JungesS22}, we define the importance of each parameter $i = \{1,\ldots,|\Param|\}$ as the \emph{expected number of visits of states in which parameter $i$ appears} multiplied with the \emph{width $\epsilon_i$ of the outgoing probability interval}.
\end{enumerate}
As discussed in \cref{sec:experiments}, for each parameter $\param_i \in \Param$ we compute a probability interval using Hoeffding's inequality, resulting in a symmetric probability interval around the sample mean $\hat{p}_i$.
To apply our methods, we then need to differentiate the interval bounds with respect to the sample size $N_i$ for each parameter $\param_i \in \Param$.
While differentiating $\epsilon_i = \sqrt{\frac{\log{2} - \log{(1-\beta)}}{2N}}$ with respect to $N_i$ is straightforward, the change in the sample mean $\hat{p}_i$ depends on the outcome of the additional samples.
To avoid this issue, we assume that the sample mean $\hat{p}_i$ remains constant, which is a reasonable assumption, especially for higher sample sizes.
Despite the fact that we use this heuristic, \cref{fig:learning} shows that our derivative-guided sampling strategy converges to the true solution faster than the other strategies.

{
\setlength{\tabcolsep}{3pt}
\begin{table*}[t!]

\centering
\caption{Model sizes and runtimes for all benchmarks other than grid worlds. We only solve Problem 3 for $k=10$ parameters for models with at least $10$ parameters, i.e., $|\Param| \geq 10$.}

\begin{threeparttable}

\scalebox{0.8}{
\setlength{\tabcolsep}{3pt}
\begin{tabular}{llrrrrrrrr}
\toprule
\multicolumn{5}{c}{{Model statistics}} & \multicolumn{2}{c}{{Verification}} & \multicolumn{1}{c}{{All derivs.}} & \multicolumn{2}{c}{{$k$ highest derivs.}} \\
  \cmidrule(lr){1-5} \cmidrule(lr){6-7} \cmidrule(lr){8-8} \cmidrule(lr){9-10}
Instance & Type & $|\States|$ & $|\Param|$ & \#trans & $\ensuremath{\mathsf{sol}_{(R)}}[\U]$ & Verify [s] & Explicit [s] & $k=1$ [s] & $k=10$ [s] \\
\midrule
BRP (16,2) &  pMC &    613 &          2 &         803 &          0.10 &             0.11 &                        0.17 &               0.20 &                 --- \\
BRP (32,3) &  pMC &   1638 &          2 &        2179 &          0.04 &             0.29 &                        0.46 &               0.51 &                 --- \\
BRP (64,4) &  pMC &   4103 &          2 &        5507 &          0.02 &             0.74 &                        1.13 &               1.18 &                 --- \\
BRP (512,5) &  pMC &  39432 &          2 &       53251 &          0.02 &             7.77 &                       11.66 &              12.12 &                 --- \\
BRP (1024,6) &  pMC &  92169 &          2 &      124931 &          0.01 &            20.66 &                       29.69 &              31.76 &                 --- \\
Crowds (3,5) &  pMC &   1367 &          2 &        2027 &          0.87 &             0.25 &                        0.36 &               0.40 &                 --- \\
Crowds (6,5) &  pMC &     14 &          1 &          16 &          0.25 &             0.00 &                        0.01 &               0.03 &                 --- \\
Crowds (10,5) &  pMC & 104512 &          2 &      246082 &          0.83 &            27.86 &                       36.02 &              36.28 &                 --- \\
NAND (2,4) &  pMC &    326 &          2 &         435 &          0.71 &             0.06 &                        0.10 &               0.14 &                 --- \\
NAND (5,10) &  pMC &   8112 &          2 &       11577 &          0.57 &             1.60 &                        2.35 &               2.42 &                 --- \\
NAND (10,15) &  pMC & 104412 &          2 &      156247 &          0.51 &            26.37 &                       36.55 &              37.02 &                 --- \\
   Virus &  pMC &    761 &         14 &        5009 &         88.89 &             0.18 &                        0.57 &               0.60 &                0.60 \\
   WLAN0 &  pMC &   2711 &         15 &        4877 &     65,886.52 &             0.47 &                        1.02 &               1.06 &                1.07 \\
CSMA (2,4) &  pMC &   7958 &         26 &       10594 &         98.58 &             1.39 &                        3.05 &               3.07 &                3.09 \\
Coin (4) &  pMC &  22656 &          4 &       74957 &          0.92 &             4.25 &                        7.71 &               7.80 &                 --- \\
    Maze &  pMC &   1303 &        590 &        2658 &         79.81 &             0.26 &                        1.76 &               0.54 &                0.54 \\
Drone (mem1) &  pMC &   4179 &       1053 &        9414 &          0.11 &             0.88 &                       12.34 &               1.44 &                1.52 \\
Drone (mem5) &  pMC &  32403 &      12286 &       70099 &          0.11 &             8.58 &                   16,592.78\tnote{a} &              12.73 &               14.02 \\
Satellite (36,5) &  pMC &  31325 &       2555 &      156924 &          0.00 &            10.68 &                      402.14 &              65.83 &               66.30 \\
Satellite (36,65) &  pMC & 217561 &      10042 &      615433 &          0.00 &            96.48 &                  203,804.52\tnote{a} &             179.62 &              199.52 \\
\midrule
BRP (16,2) & prMC &    613 &          2 &         803 &          0.24 &             1.17 &                        0.02 &               0.03 &                 --- \\
BRP (16,2) & prMC &    613 &        190 &         803 &          0.24 &             1.16 &                        0.19 &               0.03 &                0.03 \\
BRP (32,3) & prMC &   1638 &          2 &        2179 &          0.13 &             3.09 &                        0.04 &               0.05 &                 --- \\
BRP (32,3) & prMC &   1638 &        541 &        2179 &          0.13 &             3.10 &                        1.00 &               0.05 &                0.06 \\
BRP (64,4) & prMC &   4103 &          2 &        5507 &          0.07 &             7.97 &                        0.11 &               0.12 &                 --- \\
BRP (64,4) & prMC &   4103 &       1404 &        5507 &          0.07 &             8.00 &                        5.75 &               0.13 &                0.17 \\
BRP (512,5) & prMC &  39432 &          2 &       53251 &          0.14 &            76.16 &                        1.11 &               1.69 &                 --- \\
BRP (512,5) & prMC &  39432 &      13819 &       53251 &          0.14 &            76.34 &                      746.47 &               1.71 &                1.59 \\
BRP (1024,6) & prMC &  92169 &          2 &      124931 &          0.07 &           181.33 &                        2.66 &               5.08 &                 --- \\
BRP (1024,6) & prMC &  92169 &      32762 &      124931 &          0.08 &           180.37 &                    4,625.87\tnote{a} &               5.27 &                3.33 \\
Crowds (3,5) & prMC &   1367 &          2 &        2027 &          0.91 &             2.28 &                        0.03 &               0.04 &                 --- \\
Crowds (3,5) & prMC &   1367 &        495 &        2027 &          0.92 &             2.84 &                        1.68 &               0.05 &                0.42 \\
Crowds (6,5) & prMC &     14 &          2 &          16 &          0.29 &             0.02 &                        0.00 &               0.00 &                 --- \\
Crowds (10,5) & prMC & 104512 &          2 &      246082 &          0.88 &           183.58 &                        4.12 &               2.95 &                 --- \\
Crowds (10,5) & prMC & 104512 &      38610 &      246082 &          0.88 &           389.26 &                   36,477.85\tnote{a} &               5.19 &                 --- \\
NAND (2,4) & prMC &    326 &          2 &         435 &          0.94 &             0.60 &                        0.01 &               0.01 &                 --- \\
NAND (2,4) & prMC &    326 &        109 &         435 &          0.95 &             0.72 &                        0.09 &               0.02 &                0.07 \\
NAND (5,10) & prMC &   8112 &          2 &       11577 &          0.99 &            15.91 &                        0.38 &               0.22 &                 --- \\
NAND (5,10) & prMC &   8112 &       3465 &       11577 &          1.00 &            18.27 &                      158.02 &               0.41 &                2.82 \\
NAND (10,15) & prMC & 104412 &          2 &      156247 &          1.00 &           341.76 &                       37.58 &               3.62 &                 --- \\
NAND (10,15) & prMC & 104412 &      51835 &      156247 &          1.00 &           764.31 &                   44,289.51\tnote{a} &               5.79 &                 --- \\
   Virus & prMC &    761 &         14 &        5009 &        151.30 &             2.26 &                        0.06 &               0.04 &                0.31 \\
   Virus & prMC &    761 &        558 &        5009 &        151.53 &             2.29 &                        0.85 &               0.04 &                0.05 \\
   WLAN0 & prMC &   2711 &         45 &        4877 & \num{66.7e06} &             3.13 &                        0.18 &               0.04 &                0.55 \\
CSMA (2,4) & prMC &   7958 &          1 &       10594 &        100.11 &             7.56 &                        0.04 &               0.07 &                 --- \\
CSMA (2,4) & prMC &   7958 &        202 &       10594 &        113.48 &             9.80 &                        1.68 &               0.18 &                0.16 \\
Coin (4) & prMC &  22656 &          4 &       74957 &          0.97 &            39.66 &                        1.00 &               0.69 &                 --- \\
Coin (4) & prMC &  22656 &       6214 &       74957 &          0.97 &            38.56 &                      876.16 &               0.71 &                6.49 \\
    Maze & prMC &   1303 &        590 &        2658 &        177.80 &             5.11 &                        2.36 &               0.13 &                0.14 \\
    Maze & prMC &   1303 &       1300 &        2658 &        207.22 &             5.14 &                       10.78 &               0.13 &                0.93 \\
Drone (mem1) & prMC &   4179 &       1053 &        9414 &          0.20 &             9.54 &                       19.30 &               0.23 &                0.34 \\
Drone (mem1) & prMC &   4179 &       4176 &        9414 &          0.39 &            17.64 &                    1,018.50 &               0.82 &                7.63 \\
Drone (mem5) & prMC &  32403 &      12286 &       70099 &          0.20 &           844.53 &                   10,800.88 &               2.95 &                4.59 \\
Drone (mem5) & prMC &  32403 &      32401 &       70099 &          0.39 &         1,808.07 &                  173,662.85\tnote{a} &               4.51 &                 --- \\
Satellite (36,5) & prMC &  31325 &         50 &      156924 &          0.00 &            43.04 &                       83.81 &               0.95 &               60.26 \\
Satellite (36,5) & prMC &  31325 &      30040 &      156924 &          0.12 &           609.86 &                  186,404.70\tnote{a} &               4.64 &              232.28 \\
Satellite (36,65) &  prMC & 217561 &      10042 &      615433 &          Timeout\tnote{b} &            --- &                  --- &             --- &              --- \\
\bottomrule
\end{tabular}
}
\begin{tablenotes}
        \raggedright
        \item[a] \tableextrapolationtext
        \item[b] \tabletimeouttext
\end{tablenotes}
\end{threeparttable}
\label{tab:benchmarks}
\end{table*}
}

\end{document}